\documentclass[sigconf]{acmart}
\AtBeginDocument{%
  }

\setcopyright{acmlicensed}
\copyrightyear{2018}
\acmYear{2018}
\acmConference[Conference acronym 'XX]{Make sure to enter the correct
  conference title from your rights confirmation email}{June 03--05,
  2018}{Woodstock, NY}
\acmDOI{XXXXXXX.XXXXXXX}
\acmISBN{978-1-4503-XXXX-X/2018/06}

\usepackage{subcaption}
\usepackage[utf8]{inputenc} 
\usepackage[T1]{fontenc}    
\usepackage{hyperref}       
\usepackage{url}            
\usepackage{booktabs}       
\usepackage{amsfonts}       
\usepackage{nicefrac}       
\usepackage{microtype}      
\usepackage{xcolor} 
\usepackage{makecell}
\usepackage{amsmath}
\usepackage{graphicx}
\usepackage{wrapfig}
\usepackage{enumitem}
\usepackage{multirow}
\usepackage{xcolor}
\usepackage[table]{xcolor}
\usepackage{CJKutf8}
\usepackage{amsthm}
\usepackage{bm}
\usepackage{pifont}
\usepackage{caption}
\usepackage{mathtools}

\begin{document}

\title{U-shaped Multi-granularity Learning for Vision-Language Models}

\author{Biao Chen}
\affiliation{%
  \institution{University of Electronic Science and Technology of China}
  \city{Chengdu}
  \country{China}
}
\affiliation{%
  \institution{Hithink Research, Hangzhou, China}
  \country{}
}
\email{chenbiao2@myhexin.com}

\author{Yunqian Yu}
\affiliation{%
  \institution{University of Electronic Science and Technology of China}
  \city{Chengdu}
  \country{China}
}
\email{yuyunqian2022@163.com}

\author{Xiangxu Zhao}
\affiliation{%
  \institution{University of Electronic Science and Technology of China}
  \city{Chengdu}
  \country{China}
}
\email{xxzhao@std.uestc.edu.cn}

\author{Zhongshu Chen}
\affiliation{%
  \institution{University of Electronic Science and Technology of China}
  \city{Chengdu}
  \country{China}
}
\email{zschen@std.uestc.edu.cn}

\author{Mengmeng Jing}
\affiliation{%
  \institution{University of Electronic Science and Technology of China}
  \city{Chengdu}
  \country{China}
}
\email{jingmeng1992@gmail.com}

\author{Lin Zuo}
\authornote{Corresponding author.}
\affiliation{%
  \institution{University of Electronic Science and Technology of China}
  \city{Chengdu}
  \country{China}
}
\email{linzuo@uestc.edu.cn}

\renewcommand{\shortauthors}{Biao Chen, Yunqian Yu, Xiangxu Zhao, Zhongshu Chen, Mengmeng Jing, and Lin Zuo}

\begin{abstract}
The prompt learning paradigm for vision-language models is effective yet faces a granularity dilemma: global prompts lack fine-grained semantic awareness, while local prompts ignore contextual associations, limiting cross-task generalization. This dilemma exists in dense prediction tasks.
Inspired by U-Net, which unifies multi-level representations across granularities, we propose UPrompt, a U-shaped multi-granularity prompt learning framework for vision-language models.
Similar to how U-Net integrates fine and coarse features through symmetric encoder-decoder pathways with cross-level connections, UPrompt constructs parallel multi-granularity representations in both visual and textual modalities, where coarse-to-fine cascaded enhancement propagates global context to refine local details, while fine-to-coarse hierarchical supervision ensures semantic consistency across scales. 
Extensive experiments on 17 benchmarks validate our effectiveness. UPrompt outperforms MAMET and VPKE by 4.1 and 7.3 rSum on MSCOCO, surpasses CoCoA-Mix by 5.09\% in base-to-novel generalization, while maintaining competitive performance with minimal overhead (coarse-grained) and matching PSRC with 1/3 cost (medium-grained). Our code is available at https://github.com/JustCoolPig/UPrompt.
\end{abstract}

\begin{CCSXML}
<ccs2012>
   <concept>
       <concept_id>10010147.10010257.10010293</concept_id>
       <concept_desc>Computing methodologies~Machine learning approaches</concept_desc>
       <concept_significance>500</concept_significance>
       </concept>
 </ccs2012>
\end{CCSXML}

\ccsdesc[500]{Computing methodologies~Machine learning approaches}

\keywords{Vision Language Model, Cross-modal Retrieval}


\maketitle

\section{Introduction}
\label{sec:intro}
Prompt learning has emerged as an effective paradigm for Vision-Language Model (VLM) adaptation by optimizing learnable prompt tokens~\cite{zhou2022learning, khattak2023maple, chen2026dropout}. 
However, existing methods~\cite{yu2025visual, yao2023visual, CoPrompt, chen2025chain} mainly optimize at a single, fixed granularity, creating an inherent trade-off between capturing broad context and preserving fine-grained visual details~\cite{guo2025dvit, yu2026instruction}. Global prompting strategies cannot encode local features for fine-grained reasoning: CoOp~\cite{zhou2022learning}, using a single global prompt, underperforms fine-grained TAP~\cite{dingtree} by 11.31\% on the FGVCAircraft dataset. Conversely, finely structured prompts struggle to integrate sufficient global context or model cross-region compositional relationships, as illustrated in Fig.~\ref{motivation}. This granularity bottleneck significantly constrains adaptation performance and generalization across diverse vision-language tasks.

In view of this problem, we turn to hierarchical architectures that have demonstrated remarkable success in dense prediction tasks, exemplified by U-Net~\cite{ronneberger2015u}. 
U-Net enables effective multi-scale modeling through its symmetric encoder-decoder design and skip connections, jointly preserving high-level semantic context and fine-grained local details. However, transferring these principles to prompt learning is non-trivial. A fundamental paradigm gap exists: U-Net operates on spatially structured pixel grids~\cite{williams2023unified}, whereas prompt learning functions in an abstract embedding space~\cite{huang2025adaptive,li2024relationship}, where textual prompts lack inherent geometric structure. This raises two core challenges: constructing meaningful multi-granularity representations that embody semantic hierarchy in both modalities, and establishing bidirectional information flow between granularities to ensure cross-level consistency and complementary learning.

\begin{figure*}
\vspace{-8pt}
    \centering    \includegraphics[width=0.80\linewidth]{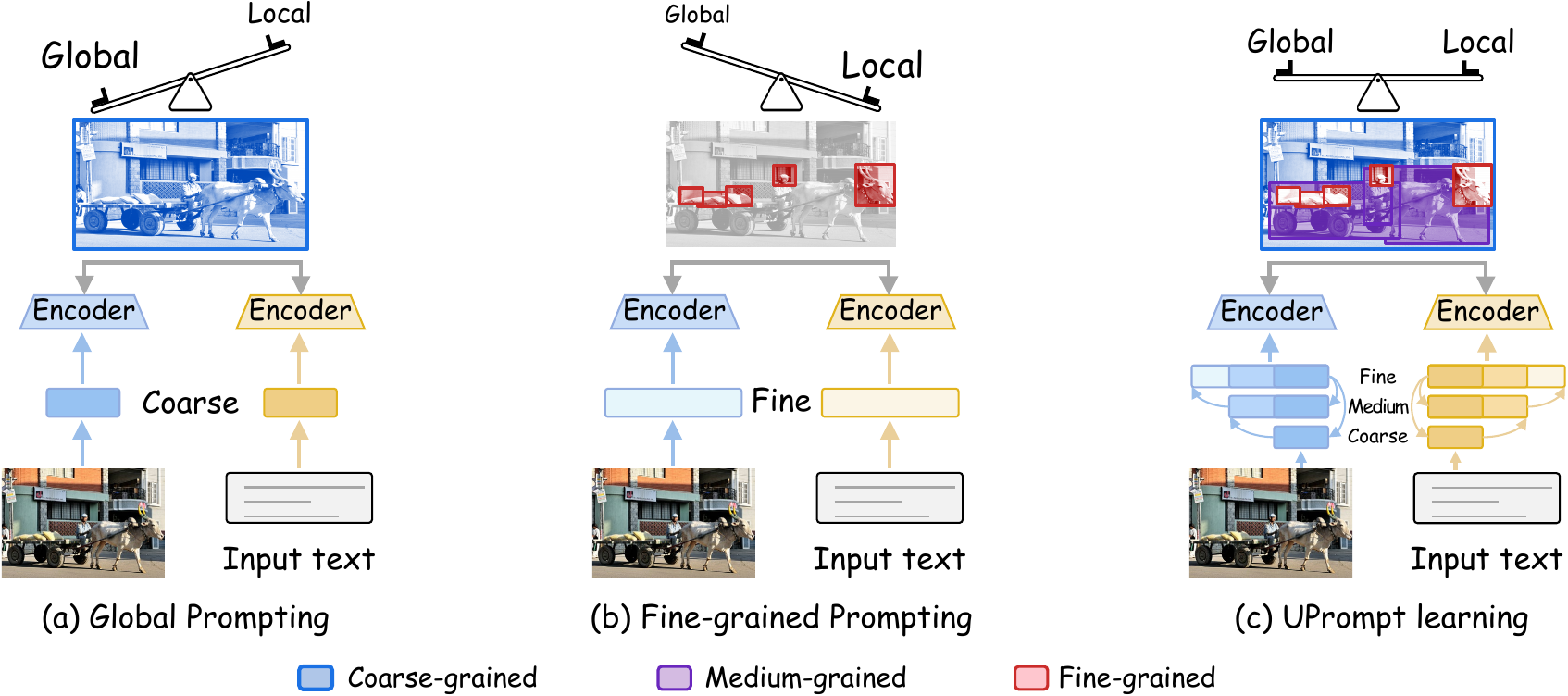}
    \vspace{-10pt}
    \caption{\textbf{The granularity trade-off in prompt learning.} (a) Global prompting captures broad context but lacks fine-grained details, while (b) fine-grained prompting preserves local features but loses global information. (c) UPrompt learning addresses the trade-off through multi-granularity hierarchical modeling, achieving both global understanding and local precision.}
    \label{motivation}
    \vspace{-10pt}
\end{figure*}
To address these challenges, we propose UPrompt learning, a U-shaped multi-granularity prompt learning framework that introduces a structured hierarchy into vision-language adaptation, as shown in Fig.~\ref{motivation}(c). Inspired by U-Net's multi-scale feature fusion mechanisms, such as skip connections~\cite{ronneberger2015u}, UPrompt constructs parallel granularity pathways in vision and language, using progressive spatial pooling for images and iterative semantic enrichment for text.
Moreover, the framework incorporates a bidirectional connection mechanism analogous to U-Net's skip connections: a coarse-to-fine cascaded mechanism that injects global context into fine-grained features using cross-granularity attention, and fine-to-coarse hierarchical supervision that distributes semantic knowledge from the finest to the coarsest levels via distillation. This ensures not only enhanced representational capacity at each level, but also consistent semantics across granularities.
Comprehensive evaluation across diverse benchmarks validates the effectiveness of our approach. UPrompt bridges hierarchical representation learning principles from U-Net to prompt-based VLMs, establishing a unified framework that resolves the limitation of single granularity through bidirectional information flow. This architecture provides flexible multi-granularity alignment while ensuring semantic consistency across representation scales, offering a principled solution for vision-language adaptation. Our contributions are as follows:
\begin{itemize}
[leftmargin=*,itemsep=1.2ex,parsep=0ex,label=$\bullet$]
  \item We introduce UPrompt, a U-Net-inspired framework for prompt learning that leverages hierarchical multi-granularity representations across vision-language modalities to overcome single-scale adaptation limitations.
  \item We introduce bidirectional connection, establishing bidirectional information flow across multi-granularity hierarchies. Coarse-to-fine enhancement injects global context into fine-grained representations for improved local modeling, while Fine-to-Coarse supervision leverages finest-level alignment to regularize coarser granularities, ensuring semantic consistency.
  \item Experiments on 17 benchmarks demonstrate UPrompt's superiority in cross-modal retrieval, few-shot classification, base-to-novel generalization, and out-of-distribution scenarios, while its hierarchical design enables flexible performance-efficiency trade-offs.
\end{itemize}



\section{Related Work}
\label{related_work}

\noindent\textbf{Prompt Learning in VLMs. }
CoOp~\cite{zhou2022learning} introduced prompt learning to CLIP~\cite{radford2021learning}, which was later extended to both visual and textual modalities~\cite{khattak2023maple,cho2023distribution}. To overcome single-global-prompt limitations, subsequent methods explore multi-granularity representations. GalLoP~\cite{lafon2024gallop} uses dual prompts for global and local features, TAP~\cite{dingtree} derives diverse prompts from attribute trees, and SurPL~\cite{liusurrogate} generates dynamic features. HiCroPL~\cite{zheng2025hierarchical} injects prompts at multiple levels, while SPTR~\cite{cui2025similarity} employs diverse fixed prompt ensembles. However, these methods largely treat granularities as independent modules fused only at the final stage, without unified modeling of cross-scale dependencies and information flow. Our U-Net-inspired framework instead introduces bidirectional connections for progressive integration and semantic consistency across the hierarchy.

\noindent\textbf{Hierarchical Representation.}
Multi-scale feature fusion, established by FPN~\cite{lin2017feature} and U-Net~\cite{ronneberger2015u}, is fundamental to visual understanding and has been extended to diverse architectures. UNet++~\cite{zhou2019unet++} uses nested skip connections to bridge encoder-decoder semantic gaps, while GraphFPN~\cite{zhao2021graphfpn} constructs data-dependent feature pyramids for graph neural networks. HGFormer~\cite{ding2023hgformer} performs part-whole grouping in Vision Transformers, and MSVMamba~\cite{shi2024multi} incorporates hierarchical design into State Space Models, confirming its continued relevance. VLMs must capture both coarse context and fine details, requiring corresponding textual representations at each visual granularity. However, text lacks inherent spatial structure, making direct hierarchical construction difficult. We therefore build and align multi-granularity hierarchies across modalities.

\begin{figure*}
    \centering    \includegraphics[width=1\linewidth]{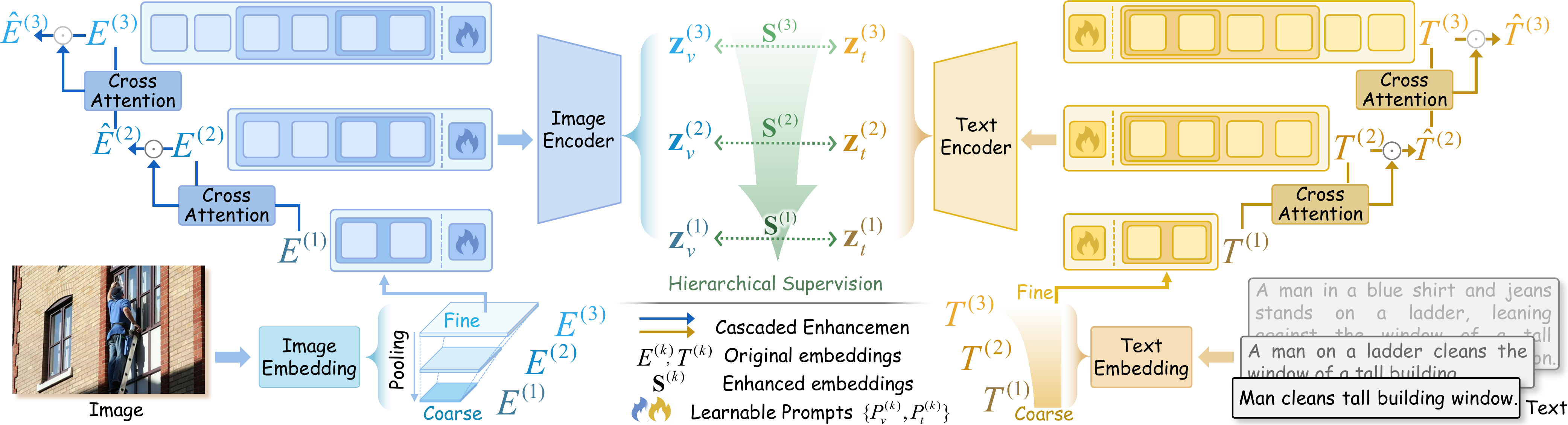}
    \vspace{-10pt}
    \caption{\textbf{Method overview of UPrompt.} UPrompt learning constructs hierarchical vision-language alignment via multi-granularity pathways with learnable prompts. Bidirectional connection operates through coarse-to-fine cascaded enhancement that injects global context into fine-grained embeddings via cross-attention, and fine-to-coarse hierarchical supervision that guides coarser levels using finest-grained representations.}
    \label{pipeline}
    \vspace{-10pt}
\end{figure*}
\noindent\textbf{Cross-Level Interaction.}
Hierarchical representations commonly follow coarse-to-fine and fine-to-coarse paradigms. Coarse-to-fine methods, such as Stacked Hourglass Networks~\cite{newell2016stacked} and RefineNet~\cite{lin2017refinenet}, propagate global context to refine local details. Fine-to-coarse methods, including GroupViT~\cite{xu2022groupvit} and HVQ~\cite{lu2023hierarchical}, aggregate low-level features to maintain high-level consistency. NeRD-Rain~\cite{chen2024bidirectional} further enables bidirectional flow, refining features with coarser context while enriching them with finer details. Inspired by these strategies, we address the isolated optimization of hierarchical prompts in VLMs through bidirectional connections that enable progressive integration and semantic consistency across the hierarchy.

\section{Methodology}\label{method}
\subsection{Preliminaries}\label{Preliminaries}
\noindent\textbf{CLIP.}
Contrastive Language-Image Pre-training (CLIP)~\cite{radford2021learning} uses an image encoder $\mathcal{F}$ and a text encoder $\mathcal{G}$ to map image $x$ and text $t$ into a shared space: $\mathbf{z}_v = \mathrm{norm}(\mathcal{F}(x))$, $\mathbf{z}_t = \mathrm{norm}(\mathcal{G}(t))$. A symmetric contrastive loss aligns matched pairs and separates mismatched ones, enabling zero-shot classification and cross-modal retrieval. With temperature $\tau$, the probability is:
\begin{equation}
p(k|x) = \frac{\exp(\mathrm{sim}(\mathbf{z}_v, \mathbf{z}_{t,k}) / \tau)}{\sum_{j} \exp(\mathrm{sim}(\mathbf{z}_v, \mathbf{z}_{t,j}) / \tau)}.
\end{equation}
\noindent\textbf{Prompt learning methodology.}
Prompt learning efficiently optimizes VLMs by incorporating learnable prompt tokens instead of full fine-tuning approaches~\cite{zhou2022learning,khattak2023maple}. The textual and visual input token sequences at transformer layer $i$ are formally defined as:
$T_{input}^{(i)} = \{t_{bos}, P_{t}^{(i)}, T_{embed}, t_{eos}\}$ and 
$V_{input}^{(i)} = \{v_{cls}, E_{patch}, P_{v}^{(i)}\}$, 
where $P_{t}^{(i)} = \{p_t^1, p_t^2, ..., p_t^\eta\}$ and $P_{v}^{(i)} = \{p_v^1, p_v^2, ..., p_v^M\}$ are learnable prompt vectors with dimensions $\mathbb{R}^{\eta}$ and $\mathbb{R}^{M}$ respectively.

\noindent\textbf{U-shaped architecture.}
U-shaped networks (e.g., U-Net~\cite{ronneberger2015u}) consist of a symmetric encoder-decoder design with skip connections between corresponding layers. Let the network have $L$ levels; the encoder at level $i$ outputs features $\mathbf{h}^{(i)}$, and the decoder at level $i$ fuses them with the upsampled features from level $i+1$:
\begin{equation}
\tilde{\mathbf{h}}^{(i)} = \phi^{(i)}\big(\mathbf{h}^{(i)}, \mathrm{up}(\tilde{\mathbf{h}}^{(i+1)})\big),
\end{equation}
where $\phi^{(i)}(\cdot)$ is a cross-level fusion operator and $\mathrm{up}(\cdot)$ denotes upsampling. This enables multi-level information propagation, maintaining global context while preserving fine details.

\subsection{U-Shaped Multi-Granularity Prompting}
\noindent\textbf{UPrompt Learning Paradigm.}
To address the limitation of trade-off between global context and local details in single-granularity prompt learning, we draw inspiration from U-Net's hierarchical processing where $\tilde{\mathbf{h}}^{(i)} = \phi^{(i)}(\mathbf{h}^{(i)}, \mathrm{up}(\tilde{\mathbf{h}}^{(i+1)}))$ fuses multi-level features, maintaining both global context and fine-grained details across scales. Existing multi-level methods (e.g., TAP, HiCroPL) treat granularities as independent modules or operate within network depths, lacking cross-scale dependencies across semantic hierarchies. We propose U-shaped multi-granularity prompting, dubbed as UPrompt learning, that constructs parallel hierarchical semantic structures with explicit cross-granularity information flow across modalities. We extend CLIP encoders $\mathcal{F}$ and $\mathcal{G}$ to multi-granularity versions ${\mathcal{F}^{(k)}, \mathcal{G}^{(k)}}_{k=1}^K$ where $k \in [1, K]$ spans coarsest to finest granularities.


For visual modality, we construct nested patch hierarchies through progressive downsampling pooling in embedding space. The input image is first processed by CLIP's patch embedding layer to extract the finest-grained patch tokens $E_{patch}^{(K)}$. Coarser representations are derived via recursive pooling: $E_{patch}^{(k)} = \text{Pool}^{(k)}(E_{patch}^{(k+1)})$ for $k = K-1, \ldots, 1$, ensuring $|E_{patch}^{(k)}| < |E_{patch}^{(k+1)}|$. At each granularity level $k$, we concatenate the pooled patch embeddings with learnable prompts $P_v^{(k)} \in \mathbb{R}^{M \times d}$ and feed them into CLIP's vision encoder to obtain visual features: $\mathbf{z}_v^{(k)} = \mathcal{F}^{(k)}([E_{patch}^{(k)}; P_v^{(k)}])$.

For textual modality, we construct semantic hierarchies via progressive enrichment:
\begin{align}
T_{embed}^{(1)} &= \Phi_{\text{abstract}}(t), \\
T_{embed}^{(k)} &= T_{embed}^{(k-1)} \oplus \Phi_{\text{refine}}^{(k)}(t), \quad k = 2, \ldots, K,
\end{align}
where $\Phi_{\text{abstract}}(\cdot)$ extracts core semantics, $\Phi_{\text{refine}}^{(k)}(\cdot)$ generates granularity-specific elaborations, and $\oplus$ is semantic expansion ensuring nesting $T_{embed}^{(k)} \subset T_{embed}^{(k+1)}$. Text representations integrate prompts $P_t^{(k)} \in \mathbb{R}^{\eta \times d}$, i.e., $\mathbf{z}_t^{(k)} = \mathcal{G}^{(k)}([P_t^{(k)}; T_{embed}^{(k)}])$. Operators $\Phi_{\text{abstract}}(\cdot)$ and $\Phi_{\text{refine}}^{(k)}(\cdot)$ are instantiated via LLMs with specific prompts for multi-level text generation (implementation details in Sec.~\ref{experiments}).

Cross-modal alignment at each granularity $k$ is achieved through similarity $\mathbf{S}^{(k)}(x,t) = \frac{\mathbf{z}_v^{(k)} \cdot \mathbf{z}_t^{(k)}}{\|\mathbf{z}_v^{(k)}\| \|\mathbf{z}_t^{(k)}\|}$. This U-shaped architecture enables multi-granularity vision-language alignment through hierarchical prompt learning across symmetric pathways, with visual branch providing spatial representations and textual branch offering semantic specifications.

\subsection{Bidirectional Connection for UPrompt Learning}
Simple granularity stacking in UPrompt learning lacks inter-granularity interaction, causing fine-grained context deficiency and coarse-grained optimization inconsistency that limit multi-granularity representation. To address these challenges, we propose bidirectional connection for UPrompt learning, which employs coarse-to-fine cascaded enhancement during forward propagation and fine-to-coarse hierarchical supervision during backward optimization (Fig.~\ref{pipeline}).

\noindent\textbf{Coarse-to-Fine Cascaded Enhancement (CE).}
To address context deficiency where fine-grained embeddings lack global contextual guidance for modeling local information relationships, we propose cascaded enhancement that injects coarse-grained contextual information into finer embeddings.
For embeddings $X^{(k)} \in \{E_{patch}^{(k)}, T_{embed}^{(k)}\}$ at granularity level $k$, the enhancement operation:
\begin{equation}
\hat{X}^{(k)} = X^{(k)} \odot \mathcal{A}(X^{(k)}, \hat{X}^{(k-1)}),
\label{eq.5}
\end{equation}
where $\hat{X}$ are enhanced embeddings, $\odot$ is element-wise product, $\mathcal{A}(\cdot, \cdot)$ is cross-granularity attention:
\begin{equation}
\begin{aligned}
\mathcal{A}(X^{(k)}, \hat{X}^{(k-1)}) = &\text{softmax}\left(\frac{X^{(k)}\mathbf{W}_q (\hat{X}^{(k-1)}\mathbf{W}_k)^{\top}}{\sqrt{d}}\right) \\
& \cdot \hat{X}^{(k-1)}\mathbf{W}_v,
\end{aligned}
\label{eq.6}
\end{equation}
where $\mathbf{W}_{q,k,v} \in \mathbb{R}^{d \times d}$ are query, key, and value projection matrices, $d$ is the embedding dimension. Enhanced embeddings are fed into encoders to obtain $\mathbf{z}_v^{(k)} = \mathcal{F}^{(k)}([\hat{E}_{patch}^{(k)}; P_v^{(k)}])$ and $\mathbf{z}_t^{(k)} = \mathcal{G}^{(k)}([P_t^{(k)}; \hat{T}_{embed}^{(k)}])$. Fine-grained embeddings can extract contextually relevant information from global representations, enhancing local information modeling with global contextual guidance.

\begin{proposition}[CE Directional Alignment Effect]\label{prop:ce}
Let $\hat{X}^{(k)}$ be the fine-grained representation at level $k$ enhanced by coarse-to-fine cascaded enhancement (CE, Eq.~(\ref{eq.5})-(\ref{eq.6})), which leverages contextual guidance from the coarser representation $\hat{X}^{(k-1)}$. Let $X^{(k)}$ be its unenhanced counterpart. Under the mild assumption that the coarse context is informative, CE provably strengthens alignment between fine-grained features and coarse-grained guidance in expectation:
\begin{equation}
\mathbb{E}\left[ \frac{\langle \hat{X}^{(k)}, \hat{X}^{(k-1)} \rangle}{\|\hat{X}^{(k)}\| \|\hat{X}^{(k-1)}\|} \right] \ge \mathbb{E}\left[ \frac{\langle X^{(k)}, \hat{X}^{(k-1)} \rangle}{\|X^{(k)}\| \|\hat{X}^{(k-1)}\|} \right].
\end{equation}
\end{proposition}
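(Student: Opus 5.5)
The plan is to make the vague hypothesis "the coarse context is informative" precise, and then reduce the claim to a per-coordinate reweighting argument. Throughout I treat $X^{(k)}$ and $\hat{X}^{(k-1)}$ as vectors in $\mathbb{R}^d$; this is either a pooled token or a fixed row after any length mismatch has been handled by pooling. I write $c := \hat{X}^{(k-1)}$ and $a := \mathcal{A}(X^{(k)}, c)$, so that CE gives $\hat{X}^{(k)} = X^{(k)} \odot a$. The attention output is a convex combination of rows of $c\mathbf{W}_v$, so $a$ is the projected coarse context. I will formalize informativeness as the following assumption. Conditionally on $c$, the gate $a$ is large exactly on the coordinates where $X^{(k)}$ and $c$ already agree in sign, and small where they disagree. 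One concrete version is $a_j = g(x_j c_j)$ for some nonnegative, nondecreasing function $g$, up to a common positive scale that does not affect the cosine. The idealization $\mathbf{W}_v \approx \mathbf{I}$ with a softmax concentrated on the matching coarse token is one way to motivate this form.

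\textbf{Step 1: reduce to a deterministic inequality.} With $x = X^{(k)}$, I aim to show pointwise that
\begin{equation*}
\frac{\sum_j g(x_jc_j)\,x_jc_j}{\|x\odot g(x\odot c)\|\,\|c\|} \;\ge\; \frac{\sum_j x_jc_j}{\|x\|\,\|c\|}.
\end{equation*}
Taking expectations then yields the stated inequality, since the inequality holds for every realization. The factor $\|c\|$ is the same on both sides and cancels.

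\textbf{Step 2: the numerator.} The terms $x_jc_j$ and $g(x_jc_j)$ are similarly ordered, because $g$ is nondecreasing. A Chebyshev sum inequality therefore gives $\sum_j g_j x_jc_j \ge \bar g \sum_j x_jc_j$, where $\bar g$ is the average gate value. In words, reweighting toward agreeing coordinates raises the inner product relative to uniform scaling.

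\textbf{Step 3: the denominator.} This is the main obstacle. The reweighting also changes the norm $\|x\odot g\|$, which could in principle exceed $\bar g\|x\|$ and destroy the inequality. My first attempt is to argue in expectation over a symmetric model rather than pointwise. I would take $x = \alpha c + \xi$ with $\alpha > 0$ and $\xi$ isotropic noise. By sign symmetry of $\xi$, the coordinates where $g$ is large are those where the signal dominates, so the noise energy $\sum_j g_j^2\xi_j^2$ is suppressed relative to the signal energy. A second-order expansion of the cosine in $g = \bar g(1+\epsilon)$ with $\epsilon$ small would then show that the first-order gain in the numerator dominates the first-order change in the norm. If this expansion does not close, my fallback is to absorb the needed condition into the informativeness assumption itself, namely
\begin{equation*}
\mathbb{E}[\langle x\odot g, c\rangle/\|x\odot g\|] \ge \mathbb{E}[\langle x, c\rangle/\|x\|],
\end{equation*}
and to state this reduction explicitly. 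Either way, once Steps 2 and 3 are in place, combining them and taking expectations proves the proposition.
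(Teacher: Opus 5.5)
Your Step 3 is not a loose end. It is the entire difficulty, and under your hypothesis it cannot be closed: the pointwise inequality you target in Step 1 is false, and your fallback simply assumes the conclusion.

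Take $x=(3,1)$, $c=(1,1)$, $a=(1,\tfrac12)$. This has your form $a_j=g(x_jc_j)$ with $g\ge0$ nondecreasing. The Chebyshev step holds, since $\langle x\odot a,c\rangle=3.5\ge\bar g\langle x,c\rangle=3$. But $\|x\odot a\|=\sqrt{9.25}$ exceeds $\bar g\|x\|=0.75\sqrt{10}$, and the cosine drops from $4/\sqrt{20}\approx0.894$ to $3.5/\sqrt{18.5}\approx0.814$. With $a=(1,1-\epsilon)$ the drop is already first order in $\epsilon$. The product $x_jc_j$ is the wrong alignment score: it is large wherever $|x_j|$ is large, and up-weighting those coordinates inflates the norm faster than the inner product.

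Independently, no hypothesis that constrains only the \emph{ordering} of the gate can give a pointwise bound, because strong gates overshoot. For $x=(1,1)$, $c=(2,1)$, $a=(1,0)$ (monotone in both $x_jc_j$ and $c_j/x_j$), the cosine falls from $3/\sqrt{10}\approx0.949$ to $2/\sqrt5\approx0.894$.

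Your motivation also does not yield your model. With $\mathbf W_v\approx\mathbf I$ and a concentrated softmax, Eq.~(\ref{eq.6}) returns $a\approx c$. That is a signed vector, not a nonnegative function of $x_jc_j$, so nonnegativity of the gate is an extra assumption (in the paper too).

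The paper never separates numerator and norm. After the sign adjustment $v_i=\mathrm{sign}(X^{(k)}_iu_i)|u_i|$, it treats the full cosine $f(a)=\langle|X^{(k)}|\odot a,v\rangle/\||X^{(k)}|\odot a\|$ as a function of the gate. It measures informativeness by the \emph{ratio} $r_i=v_i/|X^{(k)}_i|=u_i/X^{(k)}_i$, not the product. Correctly computed, its directional-derivative lemma says the derivative of $\alpha\mapsto f(\mathbf 1+\alpha w)$ at $0$ is $\|X^{(k)}\|^{-1}\sum_i w_i(X^{(k)}_i)^2(r_i-\bar r)$, where $\bar r$ is the $(X^{(k)}_i)^2$-weighted mean of $r$. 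This is a covariance, so the norm change is already accounted for, and the sign is fixed by how the gate associates with $r$.

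That is the idea your proposal lacks. In the first example $r\propto(\tfrac13,1)$, so your gate is anti-associated with it. Your symmetric-noise expansion would at best reduce to the sign of this covariance, which monotonicity in $x_jc_j$ does not control.

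Do not, however, copy the paper's homotopy step. Its claim that $\frac{d}{dt}f(\mathbf 1+t(a-\mathbf 1))\ge0$ on $[0,1]$ does not follow from its monotone-gate assumption. The overshoot example satisfies that assumption ($r\propto(2,1)$, $a=(1,0)$), yet the derivative equals $-1/\sqrt5$ at $t=1$.

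A complete proof therefore needs a quantitative hypothesis. One option is a small-gate regime $a=\mathbf 1+\epsilon w$, where the first-order covariance term dominates. Another is a geometric condition such as $\hat X^{(k)}$ lying in the cone spanned by $X^{(k)}$ and $\hat X^{(k-1)}$; for an element-wise gate this means $a_i=\alpha+\beta\,\hat X^{(k-1)}_i/X^{(k)}_i$ with $\alpha,\beta\ge0$. Under that condition the inequality holds pointwise by planar geometry.
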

\noindent\textit{Proof.} Cascaded enhancement injects coarse contextual guidance into fine representations via cross-attention, provably improving directional alignment in expectation. See Appendix~\ref{proof_of_prop:ce}.

\noindent\textbf{Fine-to-Coarse Hierarchical Supervision (HS).}
To address optimization inconsistency from semantic drift at coarse granularities, we propose fine-to-coarse hierarchical supervision using the superior alignment of finest-grained features. The finest-level features $(\mathbf{z}_v^{(K)}, \mathbf{z}_t^{(K)})$ achieve optimal cross-modal correspondence through rich representational capacity, serving as teacher signals for coarser levels.
The finest-level cross-modal similarity matrix $\mathbf{S}^{(K)}$ provides the teacher distribution:
\begin{equation}
\mathbf{S}^{(K)}_{ij} = \frac{\mathbf{z}_{v,i}^{(K)} \cdot \mathbf{z}_{t,j}^{(K)}}{\|\mathbf{z}_{v,i}^{(K)}\| \|\mathbf{z}_{t,j}^{(K)}\|}.
\label{equation:similarity}
\end{equation}
All coarser levels are supervised via knowledge distillation from the detached finest-level representations, preventing degradation of the teaching signal by coarse-grained semantic drift:
\begin{equation}
\begin{aligned}
\mathcal{L}_{\text{guide}} = \frac{1}{K-1}\sum_{k=1}^{K-1} \mathbb{E}_{(i,j)}\Big[ D_{\text{KL}}\Big(&\text{softmax}\left(\mathbf{S}^{(k)}_{i,:} / \tau_d\right) \\
&\| \text{softmax}\left(\mathbf{S}^{(K)}_{i,:} / \tau_d\right)\Big)\Big].
\end{aligned}
\label{equation:hs}
\end{equation}
where $\tau_d$ is the distillation temperature. Detaching $\mathbf{S}^{(K)}$ prevents gradients from coarse-level training flowing back to the finest layer, ensuring that fine-grained misalignment does not corrupt coarse representations. Concurrently, CE injects global contextual guidance into fine-grained features, enabling self-correction across granularities. This hierarchical supervision enforces semantic consistency across all granularity levels, preventing coarse-grained drift while keeping the complementary nature of multi-granularity representations enables more effective cascaded enhancement.
\begin{proposition}[HS Consistency and Substitutability]\label{prop:hs}
Let $S^{(k)}$ and $S^{(K)}$ be similarity matrices from Eq.~(\ref{equation:similarity}), and define $p^{(k)}_{\tau_d}(j|i) = \text{softmax}(S^{(k)}_{i,:}/\tau_d)_j$ and $q^{(K)}_{\tau_d}(j|i) = \text{softmax}(S^{(K)}_{i,:}/\tau_d)_j$ where teacher $q^{(K)}$ is detached as in Eq.~(\ref{equation:hs}). Assuming HS aligns coarse-grained distributions with fine-grained teachers, HS bounds semantic drift and enables performance-preserving coarse inference:
\begin{equation}
\begin{aligned}
&\mathbb{E}_{(x,t),i}\!\left[\mathrm{KL}\!\left(q^{(K)}_{\tau_d}(\cdot|i) \,\|\, p^{(k)}_{\tau_d}(\cdot|i)\right)\right] \le \varepsilon \\
&\implies \mathbb{E}_{(x,t),i}\!\bigl[\bigl|\Phi\!\left(p^{(k)}_{\tau_d}(\cdot|i)\right)-\Phi\!\left(q^{(K)}_{\tau_d}(\cdot|i)\right)\bigr|\bigr] \le L\sqrt{\varepsilon/2},
\end{aligned}
\end{equation}
for any $L$-Lipschitz functional $\Phi$ w.r.t. total variation distance. The detach operation ensures gradient isolation: $\partial L_{guide}/\partial z^{(K)} = 0$.
\end{proposition}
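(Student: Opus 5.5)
The implication follows from Pinsker's inequality plus Jensen's inequality. The gradient-isolation claim follows directly from the definition of the detach (stop-gradient) operator in Eq.~(\ref{equation:hs}).

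\emph{Step 1 (pointwise TV bound).} Fix a sample $(x,t)$ and a row index $i$, and write $q_i = q^{(K)}_{\tau_d}(\cdot|i)$ and $p_i = p^{(k)}_{\tau_d}(\cdot|i)$. Both are softmax outputs, so they have full support on the finite index set of the batch, and $\mathrm{KL}(q_i\|p_i)$ is finite. Pinsker's inequality then gives
\begin{equation*}
\mathrm{TV}(p_i,q_i) \le \sqrt{\tfrac{1}{2}\mathrm{KL}(q_i\,\|\,p_i)} .
\end{equation*}
Since total variation is symmetric, the direction of the KL (teacher-first, as in the hypothesis) does not matter here.

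\emph{Step 2 (Lipschitz transfer).} By assumption, $\Phi$ is $L$-Lipschitz with respect to total variation. Hence
\begin{equation*}
|\Phi(p_i)-\Phi(q_i)| \le L\,\mathrm{TV}(p_i,q_i) \le L\sqrt{\mathrm{KL}(q_i\|p_i)/2} .
\end{equation*}

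\emph{Step 3 (averaging).} Take $\mathbb{E}_{(x,t),i}$ of both sides. Because $u\mapsto\sqrt{u}$ is concave, Jensen's inequality gives $\mathbb{E}\sqrt{\mathrm{KL}/2}\le\sqrt{\mathbb{E}[\mathrm{KL}]/2}$. Combined with the hypothesis $\mathbb{E}[\mathrm{KL}]\le\varepsilon$, this yields the stated bound $L\sqrt{\varepsilon/2}$. This is the step I expect to be the main (if mild) obstacle: the bound must be pushed through the expectation in the right order, since a direct expectation of Pinsker alone gives $\mathbb{E}\sqrt{\cdot}$ rather than $\sqrt{\mathbb{E}\,\cdot}$. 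One should also note explicitly that measurability and integrability hold; this is trivial because softmax outputs are bounded away from zero for bounded cosine similarities. For interpretation, it is worth remarking that taking $\Phi$ to be the probability mass assigned to the ground-truth index (which is $1$-Lipschitz in TV), or any bounded retrieval score, shows that coarse-level inference is close to fine-level inference. This is the "substitutability" reading.

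\emph{Step 4 (gradient isolation).} In Eq.~(\ref{equation:hs}) the teacher enters as $\mathrm{sg}(\mathbf{S}^{(K)})$, where the stop-gradient $\mathrm{sg}$ is the identity in the forward pass and has zero Jacobian. Moreover, $\mathcal{L}_{\text{guide}}$ depends on $z^{(K)}$ only through $\mathbf{S}^{(K)}$, because the student terms $\mathbf{S}^{(k)}$, $k<K$, are computed from level-$k$ features. The chain rule therefore gives $\partial\mathcal{L}_{\text{guide}}/\partial z^{(K)} = (\partial\mathcal{L}_{\text{guide}}/\partial\,\mathrm{sg}(\mathbf{S}^{(K)}))\cdot 0 = 0$.

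One caveat should be stated. Under CE, the finer embeddings depend on coarser ones, so gradients may reach coarse parameters through the path via $z^{(K)}$. The isolation claim concerns only the derivative with respect to the finest features through the guide loss, which is what is asserted.

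Finally, I would flag that the loss in Eq.~(\ref{equation:hs}) uses $\mathrm{KL}(p\|q)$ while the hypothesis uses $\mathrm{KL}(q\|p)$. Pinsker applies to either ordering, so the conclusion holds under either form of the hypothesis.
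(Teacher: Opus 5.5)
Your proposal is correct and follows the paper's proof step for step: pointwise Pinsker, the Lipschitz transfer, Jensen on the concave square root to bring the expectation inside, and treating the detached teacher as a constant to get $\partial\mathcal{L}_{\text{guide}}/\partial z^{(K)}=0$. You also make explicit two points the paper leaves implicit: the student logits $\mathbf{S}^{(k)}$, $k<K$, do not depend on $z^{(K)}$ under coarse-to-fine CE, and the HS loss as written uses $\mathrm{KL}(p\|q)$ while the hypothesis and the paper's proof use $\mathrm{KL}(q\|p)$, a mismatch Pinsker tolerates.
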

\noindent\textit{Proof.} Hierarchical supervision constrains KL divergence between coarse and fine distributions, yielding bounded substitutability via Pinsker's inequality. See Appendix~\ref{proof_of_prop:hs}.

\noindent\textbf{Overall Objective.}
The UPrompt learning framework combines contrastive alignment loss across all $K$ granularity levels with hierarchical supervision for cross-modal alignment and inter-granularity consistency. Contrastive losses are averaged for stable optimization:
\begin{equation}
\begin{split}
\mathcal{L}_{\text{UPrompt}} &= \mathcal{L}_{\text{guide}} + \\
& \frac{1}{K}\sum_{k=1}^{K} \mathbb{E}_{(x,t)}\left[-\log 
\frac{\exp(\text{sim}(\mathbf{z}_{v}^{(k)}, \mathbf{z}_{t}^{(k)}) / 
\tau)}{\sum_{t'} \exp(\text{sim}(\mathbf{z}_{v}^{(k)}, 
\mathbf{z}_{t'}^{(k)}) / \tau)}\right].
\end{split}
\end{equation}
During inference, we can flexibly select granularity levels based on performance-efficiency trade-offs. We default to finest-grained features $(\mathbf{z}_v^{(K)}, \mathbf{z}_t^{(K)})$ for optimal performance. When prioritizing efficiency, coarser levels offer reduced token requirements and lower costs while preserving semantic consistency via our fine-to-coarse hierarchical supervision that prevents coarse-grained drift.

\begin{table*}[t]
  \centering
  \caption{\textbf{Base-to-novel generalization.}  Bold values indicate the best results. HM: Harmonic Mean.}
  \vspace{-8pt}
  \setlength{\tabcolsep}{6.4pt}
  \renewcommand{\arraystretch}{0.90}
  \scalebox{0.95}{
    \begin{tabular}{lccc|ccc|ccc|ccc}
    \toprule
    \multirow{2}[4]{*}{Method} & \multicolumn{3}{c}{Average} & \multicolumn{3}{c}{ImageNet} & \multicolumn{3}{c}{Caltech101} & \multicolumn{3}{c}{OxfordPets} \\
\cmidrule{2-13}          & Base  & Novel & HM    & Base  & Novel & HM    & Base  & Novel & HM    & Base  & Novel & HM \\
    \midrule
    CoOp$_{\text{(IJCV'22)}}$ & 82.69  & 63.22  & 71.66  & 76.47  & 67.88  & 71.92  & 96.00  & 89.81  & 93.73  & 93.67  & 95.29  & 94.47  \\
    PSRC$_{\text{(ICCV'23)}}$ & 84.26  & 76.10  & 79.97  & 77.60  & 70.73  & 74.01  & 98.10  & 94.03  & 96.02  & 95.33  & 97.30  & 96.30  \\
    TAP$_{\text{(ICLR'25)}}$ & 84.75 & 77.63  & 81.04  & 77.97  & 70.40 & 73.99  & \textbf{98.90}  & 95.50  & 97.17  & 95.80  & 97.73  & 96.76  \\
    CLIP-AST$_{\text{(CVPR'25)}}$ & 85.64  & 76.99 & 81.06  & 78.44 & 70.22  & 74.10  & 98.71  & 94.00  & 96.30  & 96.23  &  97.37 & 96.80  \\
    SurPL-G$_{\text{(ICML'25)}}$ & \textbf{86.37}  & 76.32 & 81.03  & \textbf{78.74} & 70.49  & 74.39  & 98.77  & 95.16  & 96.93  & 96.37  &  97.41 & 96.89  \\
    CoCoA-Mix$_{\text{(ICML'25)}}$ & 79.31  & 75.10 & 77.03  & 75.47 & 68.92 & 72.04  & 98.02 & 94.39  & 96.17  & 95.16  &  97.60 & 96.36  \\
    \midrule
    \rowcolor[HTML]{DEE9F9}
    UPrompt$_{\text{(Ours)}}$ & 86.35  & \textbf{78.29}  & \textbf{82.12} & 78.65  & \textbf{71.24}  & \textbf{74.76}  & 98.78  & \textbf{95.84}  & \textbf{97.29}  & \textbf{96.41}  & \textbf{97.92}  & \textbf{97.16}  \\
    \midrule
    \multirow{2}[4]{*}{Method} & \multicolumn{3}{c}{StanfordCars} & \multicolumn{3}{c}{Flowers102} & \multicolumn{3}{c}{Food101} & \multicolumn{3}{c}{FGVCAircraft} \\
\cmidrule{2-13}          & Base  & Novel & HM    & Base  & Novel & HM    & Base  & Novel & HM    & Base  & Novel & HM \\
    \midrule
    CoOp$_{\text{(IJCV'22)}}$ & 78.12  & 60.40  & 68.13  & 97.60  & 59.67  & 74.06  & 88.33  & 82.26  & 85.19  & 40.44  & 22.30  & 28.75  \\
    PSRC$_{\text{(ICCV'23)}}$ & 78.27  & 74.97  & 76.58  & 98.07  & 76.50  & 85.95  & 90.67  & 91.53  & 91.10  & 42.73  & 37.87  & 40.15  \\
    TAP$_{\text{(ICLR'25)}}$ &  80.70 & 74.27  & 77.35  & 97.90  & 75.37  & 85.30  & 90.97  & 91.83  & 91.40  & 40.40  & 36.50  & 40.06  \\
    CLIP-AST$_{\text{(CVPR'25)}}$ & \textbf{84.21}  & 74.05 & 78.80  & 97.91 & 77.73  & 86.66  & 90.57  & 91.11  & 90.84  & 48.98  &  38.21 & 42.93  \\
    SurPL-G$_{\text{(ICML'25)}}$ & 83.57  & 72.77 & 77.80  & \textbf{98.90} & 72.88  & 83.92  & 90.92  & 91.81  & 91.36  & 49.20  &  36.93 & 42.19  \\
    CoCoA-Mix$_{\text{(ICML'25)}}$ & 73.09  & \textbf{74.97} & 74.01  & 91.04 & 77.37 & 83.64  & 90.09 & 90.93  & 90.50  & 33.51  &  34.15 & 33.83  \\
    \midrule
    \rowcolor[HTML]{DEE9F9}
    UPrompt$_{\text{(Ours)}}$ & 83.58  & 74.57  & \textbf{78.82}  & 98.54  & \textbf{78.43}  & \textbf{87.34} & \textbf{91.20}  & \textbf{92.16}  & \textbf{91.68}  & \textbf{49.33}  & \textbf{39.25}  & \textbf{43.72}  \\
    \midrule
    \multirow{2}[4]{*}{Method} & \multicolumn{3}{c}{SUN397} & \multicolumn{3}{c}{DTD} & \multicolumn{3}{c}{EuroSAT} & \multicolumn{3}{c}{UCF101} \\
\cmidrule{2-13}          & Base  & Novel & HM    & Base  & Novel & HM    & Base  & Novel & HM    & Base  & Novel & HM \\
    \midrule
    CoOp$_{\text{(IJCV'22)}}$ & 80.60  & 65.89  & 72.51  & 79.44  & 41.18  & 54.24  & 93.19  & 54.74  & 68.69  & 84.69  & 56.05  & 67.46  \\
    PSRC$_{\text{(ICCV'23)}}$ & 82.67  & 78.47  & 80.52  & 83.37  & 62.97  & 71.75  & 92.90  & 73.90  & 82.32  & 87.10  & 78.80  & 82.74  \\
    TAP$_{\text{(ICLR'25)}}$ & 82.87  & 79.53  & 81.17  & 84.20  & \textbf{68.00}  & 75.24  & 90.70  & 82.17  & 86.22  & 87.90  & \textbf{82.43}  & 85.08  \\
    CLIP-AST$_{\text{(CVPR'25)}}$ & 83.05  & 78.12 & 80.51  & 84.03 & 65.34  & 73.52  & \textbf{95.90} & 81.72  & 88.24  & 87.38  &  79.12 & 83.05  \\
    SurPL-G$_{\text{(ICML'25)}}$ & 83.43  & 78.96 & 81.13  & \textbf{86.07} & 62.04  & 72.11  & 94.63  & 81.33  & 87.48  & \textbf{89.44}  & 79.74 & 84.31 \\
    CoCoA-Mix$_{\text{(ICML'25)}}$ & 78.51  & 76.60 & 77.54  & 72.80 & 64.29 & 68.25  & 83.49 & 69.11  & 75.54  & 81.28  &  77.75 & 79.47  \\
    \midrule
    \rowcolor[HTML]{DEE9F9}
    UPrompt$_{\text{(Ours)}}$ & \textbf{83.77}  & \textbf{80.05}  & \textbf{81.87}  & 85.60  & 67.23  & \textbf{75.31}  & 94.82  & \textbf{82.68}  & \textbf{88.33}  & 89.21  & 81.83  & \textbf{85.36}  \\
    \bottomrule
    \end{tabular}%
    }
    \vspace{-8pt}
  \label{tab_b2n}%
\end{table*}%

\begin{table}[t]
  \centering
  \caption{\textbf{Cross-modal retrieval performance} of different CLIP fine-tuning methods. ``ZS'' denotes zero-shot and ``FT'' is fine-tuned. rSum is the sum of all R@1, R@5, and R@10 scores. Best results highlighted in \colorbox[HTML]{d0e0f9}{\textbf{first}}, \colorbox[HTML]{e1ecfc}{second}.}
  \vspace{-7pt}
  \renewcommand{\arraystretch}{0.90}
  \setlength{\tabcolsep}{2.85pt}
  \scalebox{0.95}{
    \begin{tabular}{l|ccc|ccc|c}
    \toprule
    \multirow{3}[6]{*}{Methods} & \multicolumn{7}{c}{Flickr30K} \\
\cmidrule{2-8} 
    & \multicolumn{3}{c|}{Image-to-Text} & \multicolumn{3}{c|}{Text-to-Image} & \multirow{2}[4]{*}{rSum} \\
\cmidrule{2-7}
    & R@1   & R@5   & R@10  & R@1   & R@5   & R@10  &  \\
    \midrule
    CLIP$_\text{ZS}$ & 81.3  & 96.4  & 98.5  & 62.2  & 85.7  & 91.7  & 515.8 \\
    CLIP$_\text{FT}$ & 91.7  & 99.0  & 99.5  & 79.1  & 95.2  & 97.6  & 562.1 \\
    DoPL$_\text{(ACL'25)}$ & 69.8  & 90.7  & 95.0  & 66.9  & 89.0  & 93.6  & 505.0 \\
    MAMET$_\text{(TCSVT'25)}$ & 92.7  & \cellcolor[HTML]{e1ecfc}99.3  & \cellcolor[HTML]{e1ecfc}99.7  & 79.8  & 95.2  & 97.2  & 563.9 \\
    VPKE$_\text{(TCSVT'25)}$ & \cellcolor[HTML]{e1ecfc}93.7  & 99.2  & \cellcolor[HTML]{d0e0f9}\textbf{99.8}  & \cellcolor[HTML]{e1ecfc}82.0  & \cellcolor[HTML]{e1ecfc}95.7  & \cellcolor[HTML]{e1ecfc}98.2  & \cellcolor[HTML]{e1ecfc}568.6 \\
    \midrule
    UPrompt & \cellcolor[HTML]{d0e0f9}\textbf{93.8}  & \cellcolor[HTML]{d0e0f9}\textbf{99.4}  & 99.6  & \cellcolor[HTML]{d0e0f9}\textbf{83.6}  & \cellcolor[HTML]{d0e0f9}\textbf{96.3}  & \cellcolor[HTML]{d0e0f9}\textbf{98.4}  & \cellcolor[HTML]{d0e0f9}\textbf{571.1} \\
    \midrule\midrule
    \multirow{3}[6]{*}{Methods} & \multicolumn{7}{c}{MSCOCO} \\
\cmidrule{2-8}
    & \multicolumn{3}{c|}{Image-to-Text} & \multicolumn{3}{c|}{Text-to-Image} & \multirow{2}[4]{*}{rSum} \\
\cmidrule{2-7}
    & R@1   & R@5   & R@10  & R@1   & R@5   & R@10  &  \\
    \midrule
    CLIP$_\text{ZS}$ & 52.5  & 76.6  & 84.7  & 33.1  & 58.4  & 69.0  & 374.3 \\
    CLIP$_\text{FT}$ & 66.9  & 88.3  & 93.6  & 51.5  & 78.0  & 86.1  & 464.4 \\
    DoPL$_\text{(ACL'25)}$ & 63.2  & 86.7  & 91.8  & 49.6  & 76.3  & 85.2  & 452.8 \\
    MAMET$_\text{(TCSVT'25)}$ & 66.0  & 88.4  & 93.6  & 52.4  & \cellcolor[HTML]{d0e0f9}\textbf{79.3}  & \cellcolor[HTML]{e1ecfc}87.3  & 467.0 \\
    VPKE$_\text{(TCSVT'25)}$ & \cellcolor[HTML]{e1ecfc}69.2  & \cellcolor[HTML]{e1ecfc}89.0  & \cellcolor[HTML]{e1ecfc}94.2  & \cellcolor[HTML]{d0e0f9}\textbf{52.8}  & 78.5  & 86.5  & \cellcolor[HTML]{e1ecfc}470.2 \\
    \midrule
    UPrompt & \cellcolor[HTML]{d0e0f9}\textbf{70.1}  & \cellcolor[HTML]{d0e0f9}\textbf{89.8}  & \cellcolor[HTML]{d0e0f9}\textbf{94.8}  & \cellcolor[HTML]{e1ecfc}52.6  & \cellcolor[HTML]{e1ecfc}79.1  & \cellcolor[HTML]{d0e0f9}\textbf{87.9}  & \cellcolor[HTML]{d0e0f9}\textbf{474.3} \\
    \bottomrule
    \end{tabular}%
    }
  \label{tab:cross_modal_retrieval}%
  \vspace{-10pt}
\end{table}
\section{Experiments}
\label{experiments}
\noindent\textbf{Datasets.} For cross-modal retrieval, we evaluate on Flickr30K~\cite{young2014image} with 31,783 images and MSCOCO-5K~\cite{lin2014microsoft} with 123,287 images, each annotated with 5 captions. For classification tasks, we use 11 datasets: ImageNet~\cite{deng2009imagenet}, Caltech101~\cite{fei2004learning}, OxfordPets~\cite{parkhi2012cats}, StanfordCars~\cite{krause20133d}, Flowers102~\cite{nilsback2008automated}, Food101~\cite{bossard2014food}, FGVCAircraft~\cite{maji2013fine}, SUN397~\cite{xiao2016sun}, UCF101~\cite{soomro2012ucf101}, DTD~\cite{cimpoi2014describing} and EuroSAT~\cite{helber2019eurosat}. For out-of-distribution evaluation, we use ImageNet-A~\cite{hendrycks2021natural}, ImageNet-R~\cite{hendrycks2021many}, ImageNet-Sketch~\cite{wang2019learning}, and ImageNet-V2~\cite{recht2019imagenet}.

\noindent\textbf{Implementation details.} 
For multi-granularity construction, visual modality applies progressive spatial pooling to patch embeddings: from original 14×14 to 7×7, 4×4, and 1×1 tokens. Classification uses all 4 scales while retrieval uses the first 3 scales. For textual modality, we employ Llama 3-8B to generate hierarchical representations. In classification, we construct 4-level prompts:``\textit{a photo of a \{cls\}}'' (Level 1), progressively enriched with representative nouns (Level 2), attribute phrases (Level 3), and detailed descriptions (Level 4). In retrieval, original captions serve as medium granularity, with coarse granularity created via prompt``\textit{shorten the caption and keep important information}'', and fine granularity enhanced via ``\textit{add details from other captions to enhance the original caption and keep original meaning unchanged}''. Text hierarchies (via LLM) are pre-computed rather than computed in real-time, avoiding significant computational overhead.
We use CLIP ViT-B/16 as backbone by default (Ablation studies on different backbones and VLMs are in Appendix~\ref{appendix_different_backbones} and \ref{app_vlm_arch}). We employ 3- and 4-granularity configurations for retrieval and classification respectively, with distinct learnable prompts of length 4 for each granularity level. During inference, we default to the finest-grained features, which consistently yield optimal alignment. 

\subsection{Comparative Results} 
\noindent\textbf{Base-to-novel generalization.}
UPrompt achieves an 82.12\% harmonic mean (HM) across 11 datasets (Table~\ref{tab_b2n}), +1.06\% improvement over the second best method. It outperforms recent multi-level methods like TAP (81.04\%)~\cite{dingtree}, which constructs concept-attribute hierarchies, and SurPL-G (81.03\%)~\cite{liusurrogate}, which generates diverse features across granularities, as well as latest methods like CLIP-AST (81.06\%)~\cite{huang2025adaptive} and CoCoA-Mix (77.03\%)~\cite{hong2025cocoa}. While TAP and SurPL-G treat granularities as independent modules, UPrompt's U-shaped architecture establishes bidirectional information flow across hierarchical levels, providing more robust generalization with notable gains, including +0.79\% HM on the challenging FGVCAircraft. Appendix~\ref{app_error_bar} provides error bar analysis.

\begin{figure}[t]
  \centering
  \includegraphics[width=1\linewidth]{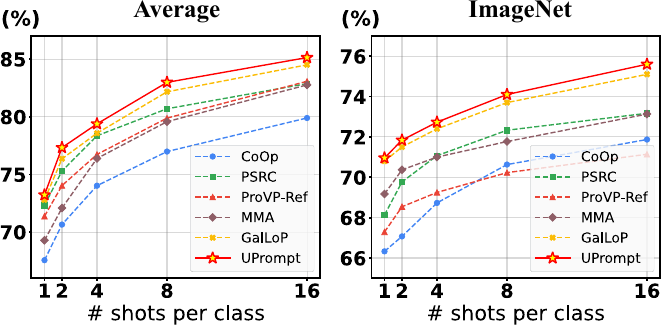}
  \vspace{-8pt}
  \caption{\textbf{Few-shot classification.} Performance on 11-dataset average and ImageNet. Remaining results are in the Appendix~\ref{other_few_shot}.}
  \label{few-shot_1}
  \vspace{-8pt}
\end{figure}

\begin{figure}[t]
\centering
\includegraphics[width=0.93\linewidth]{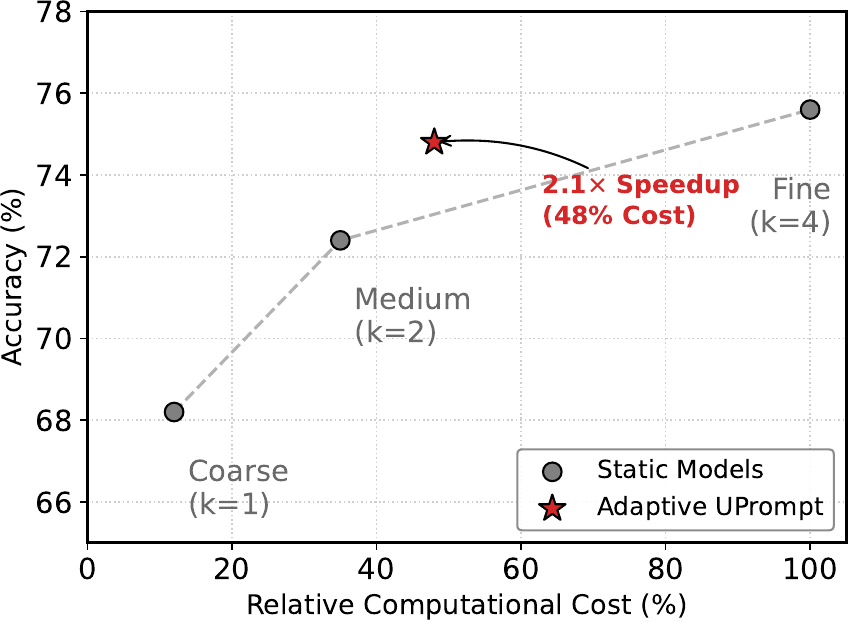}
    \vspace{-10pt}
    \caption{\textbf{Accuracy-Efficiency Trade-off on ImageNet.} Static models (grey circles) operate at fixed granularities. Adaptive UPrompt (red star) matches fine-grained accuracy while reducing computational cost to 48\%, achieving 2.1$\times$ speedup.}
    \label{fig:adaptive_tradeoff}
\vspace{-10pt}
\end{figure}

\noindent\textbf{Few-shot classification.} 
UPrompt achieves a leading 85.13\% averaged accuracy across 11 datasets at 16-shot (Fig.~\ref{few-shot_1}), surpassing recent methods: GalLoP~\cite{lafon2024gallop} (84.50\%), which learns from separate global and local features; ProVP-Ref~\cite{xu2025progressive} (83.07\%), which builds progressive layer-wise connections; and MMA~\cite{yang2024mma} (82.76\%), which adapts only higher-level representations. On ImageNet, our method performs best across all shots. Unlike these approaches lacking systematic cross-level interaction, UPrompt's bidirectional connection creates richer information flow across multi-granularity hierarchies, providing consistent gains from 1-shot to 16-shot.

\noindent\textbf{Adaptive Classification.}
We exploit the hierarchical architecture of UPrompt to enable adaptive inference via a cascaded early-exit strategy. Inference proceeds sequentially from the coarsest granularity ($k=1$) to the finest. At each level, the process terminates if the prediction confidence (maximum softmax probability) exceeds a calibrated threshold $\tau_k$; otherwise, it continues to the next level via cascaded enhancement.
To avoid data leakage, thresholds $\{\tau_k\}$ are determined via grid search on a held-out calibration set. For each threshold candidate, we compute the classification accuracy and average computational cost over the calibration set. We select thresholds that maximize early exits while maintaining accuracy comparable to the static fine-grained model. Efficiency is quantified using \textit{Average Relative Cost}:
$
    \text{Avg. Cost} = \frac{1}{N}\sum_{i=1}^{N} \frac{\mathcal{C}_{k_i}}{\mathcal{C}_K},
$
where $\mathcal{C}_{k_i}$ is the cumulative FLOPs required to reach exit level $k_i$, and $\mathcal{C}_K$ is the cost of the full model.

As illustrated in Figure~\ref{fig:adaptive_tradeoff}, Adaptive UPrompt achieves a favorable accuracy-efficiency trade-off. It attains accuracy comparable to the static fine-grained baseline (74.8\% vs. 75.6\%) while operating at only 48\% of the average relative cost, translating to approximately 2.1$\times$ speedup. Notably, we observe that most samples exit at coarse granularities ($k=1,2$), validating that our hierarchical design effectively routes easy samples to computationally cheaper levels while reserving fine-grained resources for challenging instances.

\begin{table}
  \centering
  \vspace{-5pt}
  \caption{\textbf{Out-of-distribution (OOD) generalization.} `*'~means reproduced results. Best results highlighted in \colorbox[HTML]{d0e0f9}{\textbf{first}}, \colorbox[HTML]{e1ecfc}{second}.}
  \vspace{-3pt}
  \setlength{\tabcolsep}{3.3pt}
  \renewcommand{\arraystretch}{0.98}
  \scalebox{0.93}{
    \begin{tabular}{lc|cccc|>{\columncolor[HTML]{f8f8f8}}c}
    \toprule
    \multirow{2}[4]{*}{\textbf{Method}} & \textbf{Source} & \multicolumn{5}{c}{\textbf{Target}} \\
\cmidrule{2-7}          & \textbf{ImgNet} & \textbf{-V2}  & \textbf{-S}    & \textbf{-A}    & \textbf{-R}    & \textbf{\textit{OOD}} \\
    \midrule
    CoOp & 71.51  & 64.44  & 47.61  & 49.53  & 74.98  & 59.14  \\
    PSRC  & 71.27  & 64.35  & 49.55  & 50.90  & 77.80  & \cellcolor[HTML]{e1ecfc}60.65  \\
    GalLoP* & 71.14  & 64.32  & \cellcolor[HTML]{e1ecfc}49.56  & 50.83  & 77.42  & 60.53  \\
    SPTR & 70.05  & 64.40  & 48.78  & \cellcolor[HTML]{e1ecfc}51.30  & \cellcolor[HTML]{e1ecfc}77.90  & 60.59  \\
    MMRL & \cellcolor[HTML]{e1ecfc}72.03  & \cellcolor[HTML]{e1ecfc}64.47  & 49.13  & 51.20  & 77.53  & 60.58  \\
    HiCroPL & 71.22  & 64.33  & 49.47  & 50.79  & 77.15  & 60.44  \\
    \midrule
    UPrompt  & \cellcolor[HTML]{d0e0f9}\textbf{72.25}  & \cellcolor[HTML]{d0e0f9}\textbf{65.06}  & \cellcolor[HTML]{d0e0f9}\textbf{50.43}  & \cellcolor[HTML]{d0e0f9}\textbf{51.33}  & \cellcolor[HTML]{d0e0f9}\textbf{78.04}  & \cellcolor[HTML]{d0e0f9}\textbf{61.22}  \\
    \bottomrule
    \end{tabular}%
   }
  \label{tab:ood}
    \vspace{-5pt}
\end{table}

\begin{table}[t]
  \centering
  \caption{\textbf{Ablation study on Flickr30K.} CE: coarse-to-fine \underline{C}ascaded \underline{E}nhancement; HS: fine-to-coarse \underline{H}ierarchical \underline{S}upervision. Baseline uses original image-text pairs; Fine-grained only extends baseline with finest-granularity features; ``gray'' is our default set.}
  \vspace{-5pt}
  \hspace{-10pt}
  \newcolumntype{C}[1]{>{\centering\arraybackslash}p{#1}}
  \scalebox{0.85}{
    \begin{tabular}{p{1.1cm}C{2.5cm}C{0.4cm}C{0.4cm}C{0.7cm}C{0.7cm}C{0.7cm}C{0.7cm}}
    \toprule
    \multirow{2}[4]{*}{Granularity} & \multirow{2}[4]{*}{Method} & \multicolumn{2}{c}{Strategy} & \multicolumn{2}{c}{I2T} & \multicolumn{2}{c}{T2I} \\
\cmidrule(lr){3-4} \cmidrule(lr){5-6} \cmidrule(lr){7-8}       &       & CE    & HS    & R@1   & R@5   & R@1   & R@5 \\
    \midrule
    \multirow{2}[2]{*}{Single} & Baseline & \ding{55}     & \ding{55}    & 91.2  & 98.1  & 80.0  & 94.6 \\
          & Fine-grained only & \ding{55}  & \ding{55}  & 92.2  & 98.3  & 81.1  & 95.2 \\
    \midrule
    \multirow{7}[6]{*}{Multiple} & \multicolumn{5}{l}{(\textit{Effectiveness of CE})}&       &  \\
          & Vision only & \ding{51}      &  \ding{55}   & 93.0  & 98.8  & 82.6  & 95.7 \\
          & Text only & \ding{51} & \ding{55}  & 92.7  & 98.6  & 82.4  & 95.5 \\
          & \cellcolor[HTML]{F2F2F2}Vision \& Text  & \cellcolor[HTML]{F2F2F2}\ding{51} & \cellcolor[HTML]{F2F2F2}\ding{55}  & \cellcolor[HTML]{F2F2F2}93.3  & \cellcolor[HTML]{F2F2F2}99.1  & \cellcolor[HTML]{F2F2F2}83.0  & \cellcolor[HTML]{F2F2F2}95.9 \\
\cmidrule{2-8}          & \multicolumn{5}{l}{(\textit{Effectiveness of HS})} &       &   \\
          & Coarse-to-fine & \ding{55} & \ding{51} & 86.8  & 95.1  & 75.3  & 91.5 \\
          & \cellcolor[HTML]{F2F2F2}Fine-to-coarse & \cellcolor[HTML]{F2F2F2}\ding{55}  & \cellcolor[HTML]{F2F2F2}\ding{51}  & \cellcolor[HTML]{F2F2F2}92.4  & \cellcolor[HTML]{F2F2F2}98.3  & \cellcolor[HTML]{F2F2F2}81.2  & \cellcolor[HTML]{F2F2F2}95.3 \\
\cmidrule{2-8}          & \cellcolor[HTML]{d2e2fa}Full & \cellcolor[HTML]{d2e2fa}\ding{51} & \cellcolor[HTML]{d2e2fa}\ding{51} & \cellcolor[HTML]{d2e2fa}\textbf{93.8}  & \cellcolor[HTML]{d2e2fa}\textbf{99.4}  & \cellcolor[HTML]{d2e2fa}\textbf{83.6}  & \cellcolor[HTML]{d2e2fa}\textbf{96.3} \\
    \bottomrule
    \end{tabular}%
  }
  \vspace{-10pt}
  \label{com_ablation}%
\end{table}

\begin{figure*}[htbp]
\centering
\begin{minipage}{0.35\textwidth}
  \centering
\includegraphics[width=1\textwidth]{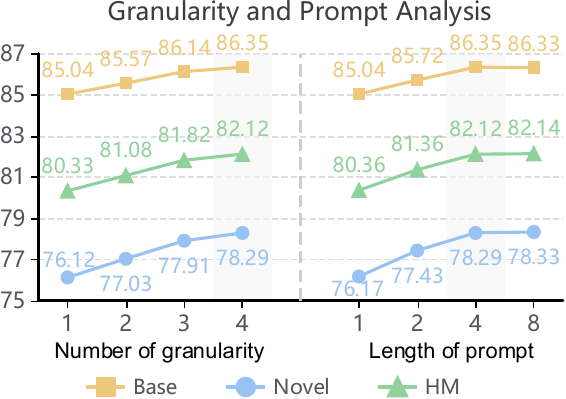}
      \vspace{-15pt}
      \caption{Granularity number and prompt length effects on classification. Left: varied granularity number (1-4). Right: varied prompt length (1-8).}
\label{grain_length_ablation}
\vspace{-5pt}
\end{minipage}
\hfill
\begin{minipage}{0.35\textwidth}
\vspace{3pt}
\centering
\includegraphics[width=1\textwidth]{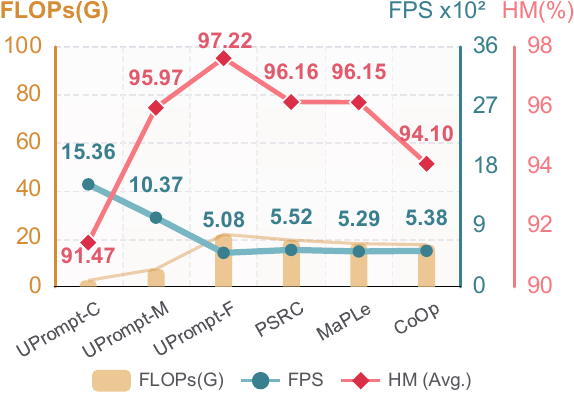}
      \vspace{-10pt}
      \caption{Efficiency-performance trade-offs across granularity levels. C, M, F denote coarse, medium, and fine configurations respectively.}
\label{fig_computational_cost}
\vspace{-5pt}
\end{minipage}
\hfill
\begin{minipage}{0.27\textwidth}
\vspace{5pt}
\centering
\captionof{table}{Text hierarchy robustness across different LLMs on Flickr30K.}
\vspace{-2pt}
\setlength{\tabcolsep}{4.2pt}
\renewcommand{\arraystretch}{0.95}
\scalebox{0.95}{
\begin{tabular}{lccc}
\toprule
\multicolumn{4}{c}{Image-to-text} \\
\midrule
Models & R@1 & R@5 & R@10 \\
\midrule
Qwen3-4B & 93.4 & 99.4 & 99.5 \\
\rowcolor[HTML]{DEE9F9}
Llama 3-8B & 93.8 & 99.4 & 99.6 \\
Qwen3-14B & 93.7 & 99.5 & 99.8 \\
\midrule\midrule
\multicolumn{4}{c}{Text-to-image} \\
\midrule
Models & R@1 & R@5 & R@10 \\
\midrule
Qwen3-4B & 83.4 & 96.2 & 98.6 \\
\rowcolor[HTML]{DEE9F9}
Llama 3-8B & 83.6 & 96.3 & 98.4 \\
Qwen3-14B & 83.9 & 96.2 & 98.7 \\
\bottomrule
\end{tabular}
}
\label{tab:diff_llm}
\end{minipage}
\end{figure*}

\noindent\textbf{Cross-modal retrieval.} We evaluate on Flickr30K and MSCOCO using Recall@K (K=1,5,10) and rSum (Table~\ref{tab:cross_modal_retrieval}). UPrompt achieves rSum scores of 571.1 and 474.3 on Flickr30K and MSCOCO, outperforming recent CLIP-based fine-tuning methods. It surpasses DoPL~\cite{guo2025parameter} (505.0, 452.8), which generates layer-wise prompts for alignment, MAMET~\cite{wang2025matryoshka} (563.9, 467.0), which distills knowledge from multiple embeddings, and VPKE~\cite{wang2025visualrag} (568.6, 470.2), which uses external visual knowledge. UPrompt's superior results, with R@1 scores of 93.8\% on Flickr30K and 70.1\% on MSCOCO, stem from its unique architecture. Its multi-granularity prompting captures vision-language alignments across various semantic levels, while hierarchical contextual guidance leads to significant improvement in retrieval precision. Appendix~\ref{app:adaptive_retrieval} further explores adaptive retrieval strategies.

\begin{table}
\large
\centering
\caption{\textbf{Resolution$\times$granularity level ablation} on Flickr30K (I2T R@1). Starting from the coarsest level, we progressively add finer granularity levels.}
\vspace{-3pt}
\setlength{\tabcolsep}{12pt}
  \renewcommand{\arraystretch}{1}
  \scalebox{0.98}{
\begin{tabular}{c|ccc}
\toprule
Resolution & Level 1 & Level 2 & Level 3 \\
\midrule
224×224 & 82.8 & 89.4 & 93.8 \\
336×336 & 83.3 & 92.2 & 95.1 \\
\bottomrule
\end{tabular}
}
\label{tab:resolution_ablation}
\vspace{-12pt}
\end{table}

\noindent\textbf{Out-of-distribution Generalization.} Domain shift evaluation examines semantic preservation (Table~\ref{tab:ood}). GalLoP~\cite{lafon2024gallop} sparse feature selection loses cross-domain information, SPTR~\cite{cui2025similarity} optimal transport maintains single-granularity stability, MMRL~\cite{guo2025mmrl} representation learning preserves generalization through decoupling, while HiCroPL~\cite{zheng2025hierarchical} bidirectional refinement focuses on task-specific alignment rather than domain robustness (60.44\%). UPrompt achieves 61.22\% through multi-granularity architecture with cascaded enhancement providing global guidance and hierarchical supervision preventing semantic drift.

\subsection{Ablation study}
\noindent\textbf{Component Ablation.}
We conduct an ablation study on Flickr30K (Table~\ref{com_ablation}) to isolate the contributions of our core components: coarse-to-fine cascaded enhancement (CE) and fine-to-coarse hierarchical supervision (HS). We use the ``Fine-grained only'' model (92.2\% I2T R@1) as our single-scale reference. 
Without CE, fine-grained features in multi-granularity architectures are very similar to single-granularity features. CE improves performance by maintaining global context while preserving fine details in ``Vision only'' mode (93.0\% I2T R@1), providing progressive textual enhancement in ``Text only'' mode (92.7\% I2T R@1), and enabling comprehensive local relationship modeling under global guidance when combined for ``Vision \& Text'' (93.3\% I2T R@1).
The directional nature of HS is critical; reversed coarse-to-fine supervision severely degrades performance by forcing detailed representations to model ambiguous signals. Conversely,  fine-to-coarse hierarchical supervision in isolation offers negligible improvement, as enforcing semantic consistency at coarse levels lacks a mechanism to refine independently optimized fine-grained prompts. The full UPrompt model achieves best results (93.8\% I2T R@1), revealing crucial connection where CE establishes cross-scale feature dependencies, allowing multi-level semantic consistency enforced by HS to effectively improve the entire representation hierarchy.

\begin{figure*}[!h]
    \centering    \includegraphics[width=0.93\linewidth]{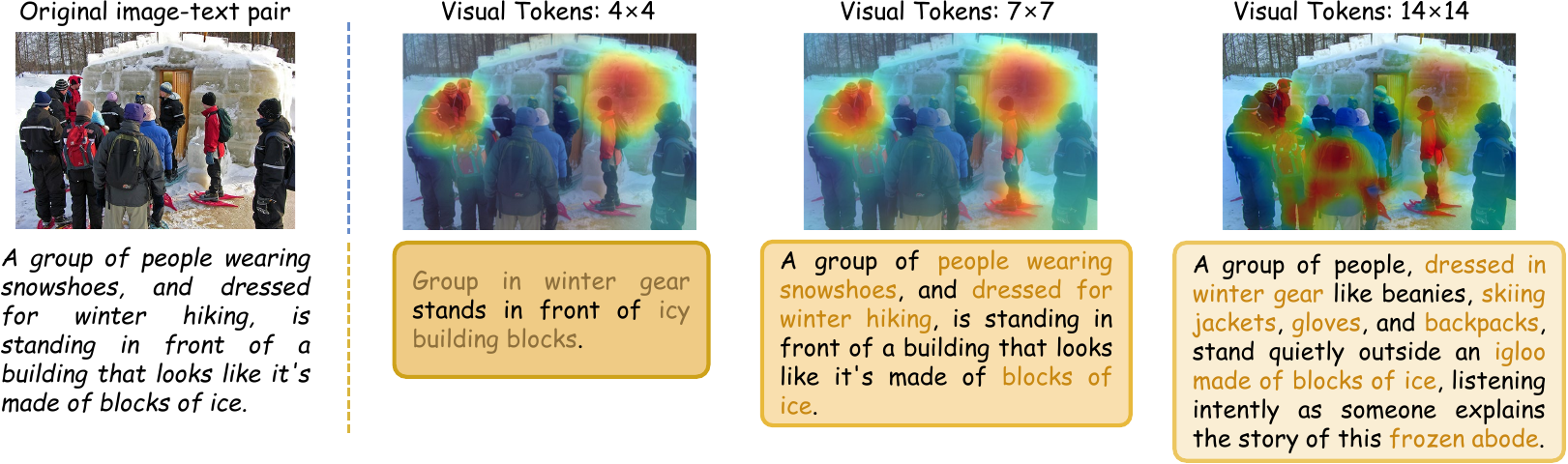}
    \vspace{-3pt}
    \caption{\textbf{Multi-granularity visual attention visualization.} Heat maps show bidirectional connection effects with cascaded enhancement preserving global context in fine-grained attention while hierarchical supervision reduces semantic drift across different levels.}
    \label{multi_grain_cam}
    \vspace{-8pt}
\end{figure*}
\noindent\textbf{Granularity Number and Prompt Length.}
We analyze sensitivity of UPrompt to the number of granularities and prompt lengths (Fig.~\ref{grain_length_ablation}). Starting from the finest level and progressively adding coarser levels, performance consistently improves as the number increases from 1 to 4, with harmonic mean reaching 82.12\% versus 80.33\% at single finest granularity, though improvement rate gradually slows with increasing computational overhead. We therefore adopt 4 granularity levels balancing performance and efficiency. For the downsampling strategy between granularities, following U-Net's 2$\times$2 downsampling principle~\cite{ronneberger2015u}, we employ approximate halving of visual token resolution ($14\times14\rightarrow7\times7\rightarrow4\times4$ for retrieval), which demonstrates superior performance over sparser intervals as analyzed in Appendix~\ref{app:interval_strategy}.
For prompt length, performance in Fig.~\ref{grain_length_ablation} (right) peaks at length 4 and remains stable at length 8, thus we set prompt length to 4.


\noindent\textbf{Resolution Analysis.}
To validate that multi-granularity improvements stem from semantic hierarchy rather than token density compensation, we conduct resolution-scale ablation on Flickr30K (Table~\ref{tab:resolution_ablation}). Results show that increasing the granularity level (Level 1→3) yields substantial gains at both 224×224 (82.8\%→93.8\%) and 336×336 (83.3\%→95.1\%), while resolution alone improves marginally (82.8\%→83.3\%). This demonstrates multi-granularity benefits dominate across resolutions, confirming our method addresses semantic hierarchy independent of spatial token density. Appendix~\ref{appendix_different_backbones} further validates consistent improvements with denser backbones (ViT-L), demonstrating framework generalization across model scales.

\noindent\textbf{Analysis of Performance-Cost.}
Fig.~\ref{fig_computational_cost} reveals the performance and cost across different granularities, where UPrompt-C, UPrompt-M, and UPrompt-F are coarse ($1\times1$), medium ($7\times7$), and fine ($14\times14$) visual tokens with corresponding textual granularities. UPrompt-F outperforms existing prompt learning methods with 97.22\% average HM across OxfordPets and Caltech101 with limited additional cost. UPrompt-M achieves comparable performance (95.97\% average HM) and matches PSRC's accuracy using only 1/3 of PSRC's FLOPs. UPrompt-C requires minimal resources while preserving reasonable performance at 91.47\% average HM. This flexibility stems from our architecture where fine-to-coarse supervision enables coarse levels to benefit from detailed representations, allowing adaptive granularity selection based on resource constraints.

\noindent\textbf{Robustness Analysis.}
We evaluate the sensitivity of our text hierarchy generation to different LLMs by conducting experiments with Qwen3-4B, Qwen3-14B~\cite{yang2025qwen3} and Llama3-8B on Flickr30K cross-modal retrieval (Table~\ref{tab:diff_llm}). The performance remains stable across models with varying architectures and scales (4B to 14B parameters), confirming our method is not sensitive to the specific LLM used. Appendix~\ref{rule_based_cons} validates that our bidirectional connection mechanism remains effective even with simple rule-based text hierarchies. Additional robustness validation across different backbones and VLM architectures is provided in Appendix~\ref{appendix_different_backbones} and~\ref{app_vlm_arch}.

\begin{table}[t]
  \centering
  \large
  \caption{\textbf{Effectiveness of Cascaded Enhancement (CE) and Hierarchical Supervision (HS)} on cross-modal retrieval on Flickr30K with different granularity sets.}
  \newcolumntype{C}[1]{>{\centering\arraybackslash}p{#1}}
  \scalebox{0.82}{
    \begin{tabular}{p{1.8cm}C{1.1cm}C{0.7cm}C{0.7cm}C{0.7cm}C{0.7cm}C{0.7cm}C{0.7cm}}
    \toprule
    \multirow{2}[4]{*}{Granularity} & \multirow{2}[4]{*}{Method} & \multicolumn{3}{c}{Image-to-text} & \multicolumn{3}{c}{Text-to-image} \\
\cmidrule(lr){3-5} \cmidrule(lr){6-8}          &       & R@1   & R@5   & R@10  & R@1   & R@5   & R@10 \\
    \midrule
    \multirow{2}[2]{*}{Coarse-grained} & w/o \textit{HS} & 75.4  & 89.9  & 92.8  & 60.4  & 82.3  & 86.8 \\
          & w/ \textit{HS} & \cellcolor[HTML]{dee9f9}82.8  & \cellcolor[HTML]{dee9f9}92.4  & \cellcolor[HTML]{dee9f9}93.5  & \cellcolor[HTML]{dee9f9}67.6  & \cellcolor[HTML]{dee9f9}87.5  & \cellcolor[HTML]{dee9f9}90.3 \\
    \midrule
    \multirow{2}[2]{*}{Fine-grained} & w/o \textit{CE} & 92.7  & 98.5  & 99.2  & 81.7  & 95.8  & 98.0  \\
          & w/ \textit{CE} & \cellcolor[HTML]{dee9f9}93.8  & \cellcolor[HTML]{dee9f9}99.4  & \cellcolor[HTML]{dee9f9}99.6  & \cellcolor[HTML]{dee9f9}83.6  & \cellcolor[HTML]{dee9f9}96.3  & \cellcolor[HTML]{dee9f9}98.4 \\
    \bottomrule
    \end{tabular}%
  }
  \label{tab:ce_and_hs_effect}%
  \vspace{-12pt}
\end{table}

\noindent\textbf{Analysis of Bidirectional Information Flow.}
We conduct an ablation study on Flickr30K to isolate our bidirectional connection components, with results in Table~\ref{tab:ce_and_hs_effect}. Fine-to-coarse Hierarchical Supervision (HS) substantially boosts coarse-grained performance (I2T R@1 from 75.4\% to 82.8\%), addressing the semantic drift caused by optimizing on the ambiguous signals inherent in simplified representations. Coarse-to-fine Cascaded Enhancement (CE) improves fine-grained performance (I2T R@1 from 92.7\% to 93.8\%), resolving context deficiency from isolated local detail modeling, without an understanding of their role within the global scene. Both components are integral: HS maintains semantic consistency for coarse representations, while CE provides contextual guidance for fine ones. To verify fine-grained supervision reliability, we compared single fine-layer supervision against mixed fine and medium-layer supervision (Appendix~\ref{Reliability of fine-grained supervision}). Results confirm fine-layer supervision alone achieves comparable performance, validating its sufficiency as the teacher signal.

\subsection{Visualization}
\noindent\textbf{Visualization of Multi-Granularity Attention.} 
Fig.~\ref{multi_grain_cam} validates UPrompt's bidirectional connection. Cascaded Enhancement enables fine-grained attention (14$\times$14) to maintain global coherence while capturing details like winter gear and textures. Hierarchical Supervision ensures coarser levels (4$\times$4, 7$\times$7) focus on semantically relevant regions, preventing background noise interference and semantic drift. The progressive refinement from global to local demonstrates effective multi-scale integration, where each granularity captures complementary information while maintaining semantic consistency. Fig.~\ref{coarse_also_yes_fig} in Appendix~\ref{appendix_granularity_specific} further shows our multi-granularity design across retrieval.

\noindent\textbf{Visualization of Bidirectional Connection Components.} 
Fig.~\ref{CAM_cascaded_enhancement_fine} and Fig.~\ref{CAM_hierarchical_supervision} in Appendix~\ref{appendix_vis_ce}, \ref{appendix_vis_hs} visualize Coarse-to-Fine Enhancement injects global context into fine-grained representations for improved local modeling, and Fine-to-Coarse Supervision leverages finest-level alignment to regularize and maintain consistency across coarser granularities.

\section{Conclusion}\label{conclusion}
In this work, we present UPrompt, a simple yet effective framework that addresses the limitation of single granularity in vision-language prompt learning. Inspired by U-Net, our U-shaped framework constructs parallel multi-granularity representations with bidirectional connections to facilitate information flow across scales. This consists of coarse-to-fine enhancement that injects global context into local details, and fine-to-coarse supervision that ensures semantic consistency. Extensive experiments demonstrate effectiveness across cross-modal retrieval, base-to-novel generalization, and few-shot classification. Despite its effectiveness, UPrompt's hierarchical depth remains limited, constraining its capacity to model richer semantic structures. Future work will explore deeper multi-granularity architectures with more hierarchical levels.

\bibliographystyle{ACM-Reference-Format}
\bibliography{sample-base}


\newpage
\appendix
\onecolumn

\clearpage
\appendix
\setcounter{theorem}{0}
{\Large \textbf{Appendix}}

The appendix provides supplementary materials including theoretical proofs, extended experiments, ablation studies, and visualizations. The contents are structured as follows:

\begin{itemize}
    \item Theoretical proofs for Propositions~\ref{proof_of_prop:ce} and~\ref{proof_of_prop:hs} (Appendix \hyperref[app_proof]{\textcolor{red}{A}}).

    \item Adaptive strategies for retrieval (Appendix \hyperref[app:adaptive_retrieval]{\textcolor{red}{B}}).
    
    \item Extended experiments on few-shot classification, cross-dataset transfer and error bar analysis (Appendix \hyperref[app_add_exp]{\textcolor{red}{C}}).
    
    \item Ablation studies on supervision reliability, granularity interval strategy and backbone architectures (Appendix \hyperref[app_abla]{\textcolor{red}{D}}).
    
    \item Sensitivity analysis on LLM-generated priors for text hierarchy construction (Appendix \hyperref[rule_based_cons]{\textcolor{red}{E}}).

    \item Visualizations of multi-granularity retrieval, cascaded enhancement, and hierarchical supervision (Appendix \hyperref[app_vis]{\textcolor{red}{F}}).
\end{itemize}

\section{Proof for the Proposition}
\label{app_proof}
\subsection{Proof for Proposition~\ref{prop:ce}}
\label{proof_of_prop:ce}
\begin{proposition}[CE Directional Alignment Effect]
Let $\hat{X}^{(k)}$ be the fine-grained representation at level $k$ enhanced by coarse-to-fine cascaded enhancement (CE, Eq.~(\ref{eq.5})–(\ref{eq.6})), which leverages contextual guidance from the coarser representation $\hat{X}^{(k-1)}$. Let $X^{(k)}$ be its unenhanced counterpart. Under the mild assumption that the coarse context is informative, CE provably strengthens the alignment between fine-grained features and their coarse-grained guidance in expectation:
\[
\mathbb{E}\!\left[ \frac{\langle \hat{X}^{(k)}, \hat{X}^{(k-1)} \rangle}{\|\hat{X}^{(k)}\| \,\|\hat{X}^{(k-1)}\|} \right] \;\ge\; \mathbb{E}\!\left[ \frac{\langle X^{(k)}, \hat{X}^{(k-1)} \rangle}{\|X^{(k)}\| \,\|\hat{X}^{(k-1)}\|} \right].
\]
\end{proposition}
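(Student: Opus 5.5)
The statement is informal: ``the coarse context is informative'' is not defined. So the plan is to turn that phrase into an explicit hypothesis, chosen so that the inequality follows from it in a few lines, and to reduce everything to a pointwise comparison of cosines.

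\textbf{Setup.} Write $x=X^{(k)}$, $c=\hat{X}^{(k-1)}$ and $a=\mathcal{A}(x,c)$. By Eq.~(\ref{eq.5}), $\hat{X}^{(k)}=x\odot a$. Shapes differ between levels, so I compare $\hat{X}^{(k)}$ with $\bar c$, the coarse context brought to the fine resolution. The cleanest choice is $\bar c=\mathrm{softmax}(\cdot)\,c$, the attention-aggregated coarse token. I identify $\langle\cdot,\hat{X}^{(k-1)}\rangle$ with $\langle\cdot,\bar c\rangle$ and state this convention explicitly.

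Because the attention output is $a=\mathrm{softmax}(\cdot)\,c\mathbf{W}_v$, I parametrize it as $a=\mathbf{1}+\delta$. Here $\mathbf{1}$ is the identity gate, which corresponds to no enhancement, and $\delta$ is the context-dependent modulation. Then
\[
\hat{X}^{(k)} = x + x\odot\delta .
\]
The quantity to control is the change in cosine
\[
\Delta=\cos(x+x\odot\delta,\ \bar c)-\cos(x,\bar c).
\]

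\textbf{Key step: formalizing ``informative''.} Let $u=\bar c/\|\bar c\|$ and split $x\odot\delta$ into its component along $u$ and its component orthogonal to $u$. Write $x=\alpha u+x_\perp$ and $x\odot\delta=\beta u+e_\perp$, so that
\[
\cos(x+x\odot\delta,\bar c)=\frac{\alpha+\beta}{\sqrt{(\alpha+\beta)^2+\|x_\perp+e_\perp\|^2}}.
\]
This cosine is monotone increasing in the ratio $(\alpha+\beta)/\|x_\perp+e_\perp\|$. The hypothesis I will adopt is therefore: in expectation, the modulation contributes relatively more mass along the coarse direction than the original feature does. Precisely,
\[
\frac{\alpha+\beta}{\|x_\perp+e_\perp\|}\ \ge\ \frac{\alpha}{\|x_\perp\|}.
\]
A sufficient form that is easier to interpret is $\beta\ge 0$ together with $\mathbb{E}\langle x_\perp,e_\perp\rangle\le 0$ (or $e_\perp$ a mean-zero noise that is small relative to $\beta$). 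Intuitively, $\mathbf{W}_v$ and the attention weights route coarse semantics into the fine tokens, and nothing systematically pushes the fine tokens orthogonally to the coarse direction.

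\textbf{Concluding.} Under this hypothesis $\Delta\ge0$ holds pointwise, or at least conditionally on $x$ after averaging over $e_\perp$. Taking expectations then gives the claimed inequality. For the conditional version I would expand $\|x_\perp+e_\perp\|^2=\|x_\perp\|^2+2\langle x_\perp,e_\perp\rangle+\|e_\perp\|^2$ and apply Jensen to the concave map $s\mapsto(\alpha+\beta)/\sqrt{(\alpha+\beta)^2+s}$. That direction of Jensen works against us, which forces a small-noise condition $\mathbb{E}\|e_\perp\|^2\lesssim 2\alpha\beta\|x_\perp\|^2/\alpha^2$. This is obtained by a first-order expansion of the cosine in $(\beta,e_\perp)$ around $0$, which gives
\[
\Delta\approx \frac{\|x_\perp\|}{\|x\|^3}\bigl(\beta\|x_\perp\|-\alpha\langle x_\perp,e_\perp\rangle/\|x_\perp\|\bigr)
\]
plus second-order terms.

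\textbf{Main obstacle.} The difficulty is honesty about the assumption: the inequality is not true for arbitrary $\mathbf{W}_{q,k,v}$. The Hadamard gate can rotate $x$ away from $\bar c$, for example when $\delta$ is anti-correlated with $\bar c/x$ coordinatewise. I would first try to state ``informative'' exactly as $\mathbb{E}[\Delta]\ge0$'s first-order term being nonnegative, i.e.\ $\mathbb{E}[\beta\|x_\perp\|^2-\alpha\langle x_\perp,e_\perp\rangle]\ge0$, with higher-order terms controlled by a bounded-modulation assumption $\|\delta\|_\infty\le\epsilon$. The proposition then holds up to $O(\epsilon^2)$, or exactly for $\epsilon$ small enough relative to the first-order margin.
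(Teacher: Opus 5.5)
\textbf{The proposal as written has a gap at the step meant to carry the content.} Your ``precise'' hypothesis $(\alpha+\beta)/\|x_\perp+e_\perp\|\ge\alpha/\|x_\perp\|$ is, by your own monotonicity remark, equivalent to $\Delta\ge0$. It therefore assumes the conclusion.

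The interpretable replacement ($\beta\ge0$ and $\mathbb{E}\langle x_\perp,e_\perp\rangle\le0$) is not sufficient, because it never controls $\|e_\perp\|^2$. Take $u=e_1$, $x=(1,1,1)$ and the nonnegative gate $a=(1,0,2)$. Then $\beta=0$ and $\langle x_\perp,e_\perp\rangle=0$, yet the cosine drops from $1/\sqrt{3}$ to $1/\sqrt{5}$.

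The Jensen step is also reversed. For $\alpha+\beta>0$ the map $s\mapsto(\alpha+\beta)/\sqrt{(\alpha+\beta)^2+s}$ is convex, so Jensen works for you. Conditionally on $(x,\beta)$, with $\alpha>0$ and $\alpha+\beta>0$, it already gives an exact, non-perturbative sufficient condition: $\alpha^2\,\mathbb{E}\bigl[2\langle x_\perp,e_\perp\rangle+\|e_\perp\|^2 \bigm| x,\beta\bigr]\le(2\alpha\beta+\beta^2)\|x_\perp\|^2$.

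What survives is your last paragraph. The first-order formula is correct; keep the factor $\|x\|^{-3}$ inside the expectation when $x$ is random. A strictly positive first-order margin together with $\|\delta\|_\infty\le\epsilon$ does give the inequality exactly once $\epsilon$ is small. Since the cosine is scale-invariant, it is enough that $a$ be $\epsilon$-close to $\lambda\mathbf{1}$ for some $\lambda>0$. What you still lack is a checkable condition that makes the first-order term nonnegative; ``first-order term $\ge0$'' is just the first-order conclusion restated.

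\textbf{How the paper proceeds, and where its route falls short.} The paper supplies such a condition. It rewrites the cosine via a sign-adjusted vector, assuming $a\ge0$, which Eq.~(\ref{eq.6}) does not guarantee. It sets $r_i=u_i/X^{(k)}_i$ and assumes $\mathbb{E}[a_i\mid X^{(k)},u]$ is nondecreasing in $r_i$. It then differentiates at the identity gate and integrates along $a(t)=\mathbf{1}+t(a-\mathbf{1})$.

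Your first-order numerator is the quantity in the paper's directional-derivative lemma, correctly normalized (weights $x_i^2$, centred at $\bar r$ rather than $f_0$):
$\beta\|x_\perp\|^2-\alpha\langle x_\perp,e_\perp\rangle=\beta\|x\|^2-\alpha\langle x,x\odot\delta\rangle=\|x\|^2\sum_i\delta_i x_i^2(r_i-\bar r)$, where $\bar r=\alpha/\|x\|^2$ is the $x_i^2$-weighted mean of the $r_i$. So if the gate perturbation is a common nondecreasing function of $r_i$, Chebyshev's sum inequality makes this nonnegative. That is the interpretable notion of ``informative'' you were after.

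The paper's homotopy step, however, claims this sign persists along the whole path, and that fails without smallness. Take $x=(1,1)$, $u\propto(1,1+\eta)$ and $a=(0,1)$. This satisfies the monotonicity assumption, yet the cosine falls from nearly $1$ to about $1/\sqrt{2}$. The first-order gain is only $O(\eta)$, while the loss is driven by the spread of the gate.

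So your insistence on second-order control (bounded modulation, or the leakage bound above) is genuinely needed. The sound version combines both ingredients: monotonicity in $r_i$ to sign the first-order term, and your bounded modulation with a strict margin to absorb the $O(\epsilon^2)$ remainder.
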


\paragraph{Proof.}
Let $u \!\triangleq\! \hat{X}^{(k-1)}/\|\hat{X}^{(k-1)}\|$ be the unit coarse direction. Denote element-wise absolute value by $|\cdot|$, and define the “sign-adjusted” vector $v\!\in\!\mathbb{R}^d$ by $v_i \!\triangleq\! \mathrm{sign}(X^{(k)}_i u_i)\,|u_i|$. Then for any nonnegative gate $a\!\in\!\mathbb{R}^d_{\ge 0}$ we can rewrite the cosine with $u$ as
\[
\frac{\langle X^{(k)}\!\odot\! a,\ u\rangle}{\|X^{(k)}\!\odot\! a\|} \;=\; 
\frac{\big\langle |X^{(k)}|\!\odot\! a,\ v\big\rangle}{\big\|\,|X^{(k)}|\!\odot\! a\,\big\|}\,.
\]
Under CE (Eq.~(\ref{eq.5})–(\ref{eq.6})), the enhanced representation takes the form $\hat{X}^{(k)} \!=\! X^{(k)}\!\odot\! A\big(X^{(k)},\hat{X}^{(k-1)}\big)$ with an \emph{element-wise} nonnegative map $A$ produced from cross-granularity softmax attention and a value mixing of $\hat{X}^{(k-1)}$; we treat $a\!=\!A\big(X^{(k)},\hat{X}^{(k-1)}\big)$ as the random gate induced by CE.

\medskip\noindent
\textbf{Monotone informative-gate (MIG) assumption}~\cite{park2019relational}.
Formally, we instantiate the “coarse context is informative” condition as:
\begin{equation}
\label{eq:mig}
\mathbb{E}\!\big[a_i \,\big|\, X^{(k)},u\big] \ \text{is coordinatewise nondecreasing in}\ \ r_i \triangleq \frac{v_i}{|X^{(k)}_i|}\,,
\end{equation}
i.e., coordinates that are better aligned with $u$ (larger $r_i$) receive, in expectation, larger CE weights.\footnote{This matches the CE mechanism: the softmax attention in Eq.~(\ref{eq.6}) increases weights where the local embedding is more similar to the coarse context, and the resulting map gates $X^{(k)}$ element-wise in Eq. (\ref{eq.5}); see also the need to make such conditions explicit when CE is \emph{element-wise} rather than additive.}

\medskip\noindent
\textbf{Directional-derivative lemma.}
Define $f(a)\!\triangleq\!\dfrac{\langle |X^{(k)}|\!\odot\! a,\ v\rangle}{\big\|\,|X^{(k)}|\!\odot\! a\,\big\|}$. A direct calculation gives, for any coordinate $i$,
\[
\frac{\partial f}{\partial a_i}(a)
\;=\;
\frac{|X^{(k)}_i|\,v_i}{\big\|\,|X^{(k)}|\!\odot\! a\,\big\|}
\;-\;
f(a)\cdot \frac{a_i\,|X^{(k)}_i|^2}{\big\|\,|X^{(k)}|\!\odot\! a\,\big\|}\,.
\]
Evaluated at the identity gate $a\!=\!\mathbf{1}$, letting $f_0\!\triangleq\! f(\mathbf{1}) \!=\! \dfrac{\langle |X^{(k)}|,v\rangle}{\|X^{(k)}\|}$, we obtain the directional derivative along any perturbation $w$:
\[
\left.\frac{d}{d\alpha} f(\mathbf{1}+\alpha w)\right|_{\alpha=0}
\;=\;
\frac{1}{\|X^{(k)}\|}\sum_{i=1}^d w_i\,|X^{(k)}_i|\,\Big(\frac{v_i}{|X^{(k)}_i|}- f_0\Big)
\;=\;
\frac{1}{\|X^{(k)}\|}\sum_{i=1}^d w_i\,|X^{(k)}_i|\,\big(r_i- f_0\big).
\]
Hence the first-order increase of $f$ at $\mathbf{1}$ is nonnegative whenever $w$ is (on average) positively associated with the alignment score $r$.

\medskip\noindent
\textbf{Homotopy argument.}
Consider the linear path $a(t)\!=\!\mathbf{1}+t\,(a-\mathbf{1})$ for $t\!\in\![0,1]$. Differentiating along the path,
\[
\frac{d}{dt} f\big(a(t)\big)
\;=\;
\sum_{i=1}^d (a_i-1)\,\frac{\partial f}{\partial a_i}\big(a(t)\big).
\]
By continuity of $\frac{\partial f}{\partial a_i}$ and the preceding lemma, it suffices that the \emph{expected} increment $(a_i-1)$ remain positively associated with the local alignment score along the path. The MIG assumption (\ref{eq:mig}) guarantees exactly this: conditioned on $(X^{(k)},u)$, the coordinates with larger $r_i$ receive larger expected weights at every $t$, so $\mathbb{E}\!\left[\tfrac{d}{dt} f\big(a(t)\big)\,\middle|\, X^{(k)},u\right]\!\ge\!0$ for all $t\in[0,1]$. Integrating from $t\!=\!0$ to $t\!=\!1$ yields
\[
\mathbb{E}\!\big[f(a)\,\big|\, X^{(k)},u\big] \;\ge\; f(\mathbf{1}) \;=\; \frac{\langle X^{(k)},u\rangle}{\|X^{(k)}\|}.
\]
Finally, taking expectation over $(X^{(k)},\hat{X}^{(k-1)})$ proves
\[
\mathbb{E}\!\left[ \frac{\langle \hat{X}^{(k)}, \hat{X}^{(k-1)} \rangle}{\|\hat{X}^{(k)}\| \,\|\hat{X}^{(k-1)}\|} \right]
\;\ge\;
\mathbb{E}\!\left[ \frac{\langle X^{(k)}, \hat{X}^{(k-1)} \rangle}{\|X^{(k)}\| \,\|\hat{X}^{(k-1)}\|} \right].
\]
\hfill\qedsymbol

\medskip
\noindent\textbf{Remarks.}
(i) The proof explicitly uses CE’s \emph{element-wise} form (\ref{eq.5})–(\ref{eq.6}); additive/value-replacement assumptions are unnecessary. (ii) The MIG condition is a precise, verifiable sufficient condition tailored to element-wise gating, addressing the need to clarify when an element-wise CE improves directional alignment (and avoiding overly strong orthogonal-leakage assumptions).

\subsection{Proof of Proposition~\ref{prop:hs}}\label{proof_of_prop:hs}
\begin{proposition}[HS Consistency and Substitutability]\label{prop:hs}
Let $S^{(k)}$ and $S^{(K)}$ be similarity matrices from Eq.~(\ref{equation:similarity}), and define $p^{(k)}_{\tau_d}(j|i) = \text{softmax}(S^{(k)}_{i,:}/\tau_d)_j$ and $q^{(K)}_{\tau_d}(j|i) = \text{softmax}(S^{(K)}_{i,:}/\tau_d)_j$ where teacher $q^{(K)}$ is detached as in Eq.~(\ref{equation:hs}). Assuming HS aligns coarse-grained distributions with fine-grained teachers, HS bounds semantic drift and enables performance-preserving coarse inference:
\begin{equation}
\mathbb{E}_{(x,t),i}\!\left[\mathrm{KL}\!\left(q^{(K)}_{\tau_d}(\cdot|i) \,\|\, p^{(k)}_{\tau_d}(\cdot|i)\right)\right] \le \varepsilon \implies \mathbb{E}_{(x,t),i}\!\bigl[\bigl|\Phi\!\left(p^{(k)}_{\tau_d}(\cdot|i)\right)-\Phi\!\left(q^{(K)}_{\tau_d}(\cdot|i)\right)\bigr|\bigr] \le L\sqrt{\varepsilon/2}
\end{equation}
for any $L$-Lipschitz functional $\Phi$ w.r.t. total variation distance. The detach operation ensures gradient isolation: $\partial L_{guide}/\partial z^{(K)} = 0$.
\end{proposition}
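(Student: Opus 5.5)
The plan is to prove the implication pointwise and then average, using Pinsker's inequality followed by Jensen's inequality. The gradient-isolation claim is a separate remark about the computation graph.

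\emph{Pointwise step.} Fix a batch $(x,t)$ and a row index $i$, and write $q_i = q^{(K)}_{\tau_d}(\cdot|i)$ and $p_i = p^{(k)}_{\tau_d}(\cdot|i)$. Both are strictly positive distributions on the finite candidate set, because they are softmax outputs, so $\mathrm{KL}(q_i\|p_i)$ is finite. Lipschitz continuity of $\Phi$ with respect to total variation gives
\[
\bigl|\Phi(p_i)-\Phi(q_i)\bigr| \le L\,\mathrm{TV}(p_i,q_i).
\]
Pinsker's inequality, with $\mathrm{TV}=\tfrac12\|\cdot\|_1$, gives
\[
\mathrm{TV}(p_i,q_i) \le \sqrt{\tfrac12\,\mathrm{KL}(q_i\|p_i)}.
\]
Together,
\[
\bigl|\Phi(p_i)-\Phi(q_i)\bigr| \le L\sqrt{\mathrm{KL}(q_i\|p_i)/2}.
\]
Total variation is symmetric, so Pinsker applies with the KL taken in the direction $q\|p$ used in the hypothesis. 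This holds even though Eq.~(\ref{equation:hs}) writes the divergence in the other order.

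\emph{Averaging step.} Take $\mathbb{E}_{(x,t),i}$ of both sides. Since $s\mapsto\sqrt{s}$ is concave, Jensen's inequality gives
\[
\mathbb{E}\bigl[\sqrt{\mathrm{KL}/2}\bigr] \le \sqrt{\mathbb{E}[\mathrm{KL}]/2} \le \sqrt{\varepsilon/2},
\]
which is exactly the stated bound. It is worth noting that $\Phi$ applied to softmax rows covers functionals such as top-1 correctness smoothed, or the expected score of the matched pair. Such functionals are bounded and $1$-Lipschitz in TV when their range lies in $[0,1]$, since $|\mathbb{E}_p g-\mathbb{E}_q g|\le \mathrm{TV}(p,q)$ for any $g$ taking values in $[0,1]$. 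This gives the ``performance-preserving coarse inference'' reading.

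\emph{Gradient isolation.} In the implementation, $S^{(K)}$ enters $\mathcal{L}_{\text{guide}}$ only through $\mathrm{stopgrad}(S^{(K)})$. Its Jacobian with respect to $z^{(K)}$ is zero by definition. The remaining dependence of $\mathcal{L}_{\text{guide}}$ is on $S^{(k)}$ for $k<K$, which are functions of $z^{(k)}$. The chain rule then gives $\partial\mathcal{L}_{\text{guide}}/\partial z^{(K)}=0$.

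The only real caveat is that CE makes $z^{(k)}$ depend on $\hat X^{(k-1)}$, so coarse parameters do receive gradient. They do not, however, feed into $z^{(K)}$ through the guide loss. The claim concerns the teacher features $z^{(K)}$ themselves, not upstream shared parameters. If shared prompts existed this step would need care, but here each level has distinct prompts $P^{(k)}$.

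I expect no real obstacle beyond being precise about the KL direction and the TV normalization constant. The factor $\tfrac12$ is what produces $\sqrt{\varepsilon/2}$.
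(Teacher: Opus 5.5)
Your proposal is correct and follows the paper's proof essentially step for step. The paper also combines Lipschitz continuity in TV with Pinsker's inequality in the $\mathrm{KL}(q\|p)$ direction, applies Jensen's inequality to the concave square root, and reads gradient isolation directly off the detached teacher; your observation that Eq.~(\ref{equation:hs}) writes the KL in the opposite order is a fair catch the paper glosses over, and as you note it does not affect the implication, since Pinsker holds in either direction.
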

\paragraph{Proof.}
Fix $(x,t)$ and anchor index $i$, and set 
$Q \coloneqq q^{(K)}_{\tau_d}(\cdot\mid i)$ and 
$P \coloneqq p^{(k)}_{\tau_d}(\cdot\mid i)$.
By Pinsker's inequality,
\[
\mathrm{TV}(P,Q)\;\le\;\sqrt{\tfrac12\,\mathrm{KL}(Q\,\|\,P)}.
\]
For any functional $\Phi$ that is $L$-Lipschitz w.r.t.\ total variation,
\[
\bigl|\Phi(P)-\Phi(Q)\bigr|
\;\le\; L\,\mathrm{TV}(P,Q)
\;\le\; L\sqrt{\tfrac12\,\mathrm{KL}(Q\,\|\,P)}.
\]
Taking expectation over $(x,t),i$ and applying Jensen's inequality (since $\sqrt{\cdot}$ is concave) yields
\[
\mathbb{E}_{(x,t),i}\!\bigl[\bigl|\Phi(P)-\Phi(Q)\bigr|\bigr]
\;\le\; L\,\mathbb{E}\!\left[\sqrt{\tfrac12\,\mathrm{KL}(Q\,\|\,P)}\right]
\;\le\; L\sqrt{\tfrac12\,\mathbb{E}\!\left[\mathrm{KL}(Q\,\|\,P)\right]}
\;\le\; L\sqrt{\varepsilon/2},
\]
which proves the stated consistency/substitutability bound.

For the gradient isolation, write the HS guidance loss as
\[
\mathcal{L}_{\text{guide}}
\;=\;
\mathbb{E}_{(x,t),i}\!\left[\mathrm{KL}\!\left(q^{(K)}_{\tau_d}(\cdot\mid i)\,\|\,p^{(k)}_{\tau_d}(\cdot\mid i)\right)\right],
\]
where the teacher $q^{(K)}_{\tau_d}$ is detached as in Eq.~(8).  
Hence $q^{(K)}_{\tau_d}$ is treated as a constant and 
\[
\frac{\partial \mathcal{L}_{\text{guide}}}{\partial z^{(K)}}=0.
\]
Equivalently, gradients flow only to the coarse head via the similarity logits $S^{(k)}$ from Eq.~(7):
if $P=\text{softmax}(S^{(k)}_{i,:}/\tau_d)$, then
\[
\frac{\partial \mathcal{L}_{\text{guide}}}{\partial S^{(k)}_{i,:}}
\;=\;\frac{1}{\tau_d}\bigl(P-Q\bigr),
\]
which is the standard soft-target distillation gradient scaled by $1/\tau_d$.
\hfill\qedsymbol

\medskip

\section{Adaptive Retrieval Strategy}
\label{app:adaptive_retrieval}

We implement a multi-stage cascaded pruning protocol to optimize the trade-off between retrieval precision and computational overhead. The inference pipeline operates progressively across granularity levels $k=1$ to $3$, dynamically reducing the gallery search space $\Omega_k$ based on prediction confidence. Initially, all $N$ gallery samples are evaluated at the coarse level, denoted as $|\Omega_1| = N$. At each stage $k$, we compute similarity scores for the current candidates and evaluate the retrieval certainty using the Relative Score Gap (RSG), defined as $\gamma_k = (s_{(1)} - s_{(1+m)}) / s_{(1)}$, where $s_{(r)}$ denotes the similarity score of the $r$-th ranked candidate and $m{=}10$ controls the rank gap for measuring confidence spread. If $\gamma_k$ exceeds a calibrated threshold $\tau_k$, the process terminates early, outputting the current top-1 prediction. Conversely, if $\gamma_k \leq \tau_k$ and $k < 3$, we retain the top-scoring half of $\Omega_k$ to construct $\Omega_{k+1}$, yielding $|\Omega_2|{=}N/2$ and $|\Omega_3|{=}N/4$ progressively.

To ensure generalization, the decision thresholds $\{\tau_1, \tau_2\}$ are calibrated via grid search on a held-out validation set. We select optimal threshold pairs that minimize average FLOPs subject to varying constraints on performance degradation. By adjusting this tolerance margin, we derive three representative operating points: \textit{Aggressive} (lower thresholds), \textit{Balanced}, and \textit{Conservative} (higher thresholds), corresponding to relaxed, moderate, and strict accuracy constraints, respectively. Specific configurations are detailed in Table~\ref{tab:adaptive_retrieval}. The results demonstrate that the \textit{Balanced} setting with $\tau_1{=}0.50$ 
and $\tau_2{=}0.65$ achieves 91.3\% R@1, retaining 97.3\% of the fine-grained 
performance (93.8\% from static Level 3), while reducing computational cost 
to 41.3\%.

\begin{table}[h]
\centering
\caption{Adaptive retrieval on Flickr30K image-to-text task. \textit{Exit Dist.} format: Coarse(\%)/Medium(\%)/Fine(\%). Thresholds $(\tau_1,\tau_2)$ denote RSG cutoffs at Levels 1 and 2 with margin $m=10$.}
\label{tab:adaptive_retrieval}
\begin{tabular}{lcccccc}
\toprule
Method & R@1 & R@5 & R@10 & Rel. Cost & Exit Dist. & Thresholds \\
\midrule
Static Level 1 & 82.8 & 92.4 & 93.5 & 14.2\% & 100/0/0 & \texttt{--} \\
Static Level 3 & 93.8 & 99.4 & 99.6 & 100.0\% & 0/0/100 & \texttt{--} \\
\midrule
Adaptive \textit{Aggressive} & 87.3 & 95.8 & 97.9 & 24.7\% & 76/18/6 & (0.30, 0.45) \\
Adaptive \textit{Balanced}   & 91.3 & 98.2 & 99.2 & 41.3\% & 52/28/20 & (0.50, 0.65) \\
Adaptive \textit{Conservative}& 93.1 & 98.9 & 99.4 & 67.1\% & 18/30/52 & (0.75, 0.85) \\
\bottomrule
\end{tabular}
\end{table}

\section{Additional Experiments}
\label{app_add_exp}
\subsection{Individual Dataset Few-shot Classification} 
\label{other_few_shot}
Results in Figure~\ref{few-shot_2} confirm UPrompt's effectiveness across diverse domains. On fine-grained tasks like StanfordCars and FGVCAircraft, our method outperforms CoOp and PSRC by capturing both specific details and global patterns through multi-granularity representations. For DTD texture classification, UPrompt surpasses MMA and ProVP-Ref via bidirectional connection that enables fine-grained modeling guided by coarse context. On SUN397 and EuroSAT, hierarchical supervision prevents semantic drift while maintaining competitive performance with GalLoP. Consistent improvements across shot configurations validate that our U-shaped architecture addresses granularity trade-offs in single-scale methods.
\begin{figure}[h]
    \centering    \includegraphics[width=1.0\linewidth]{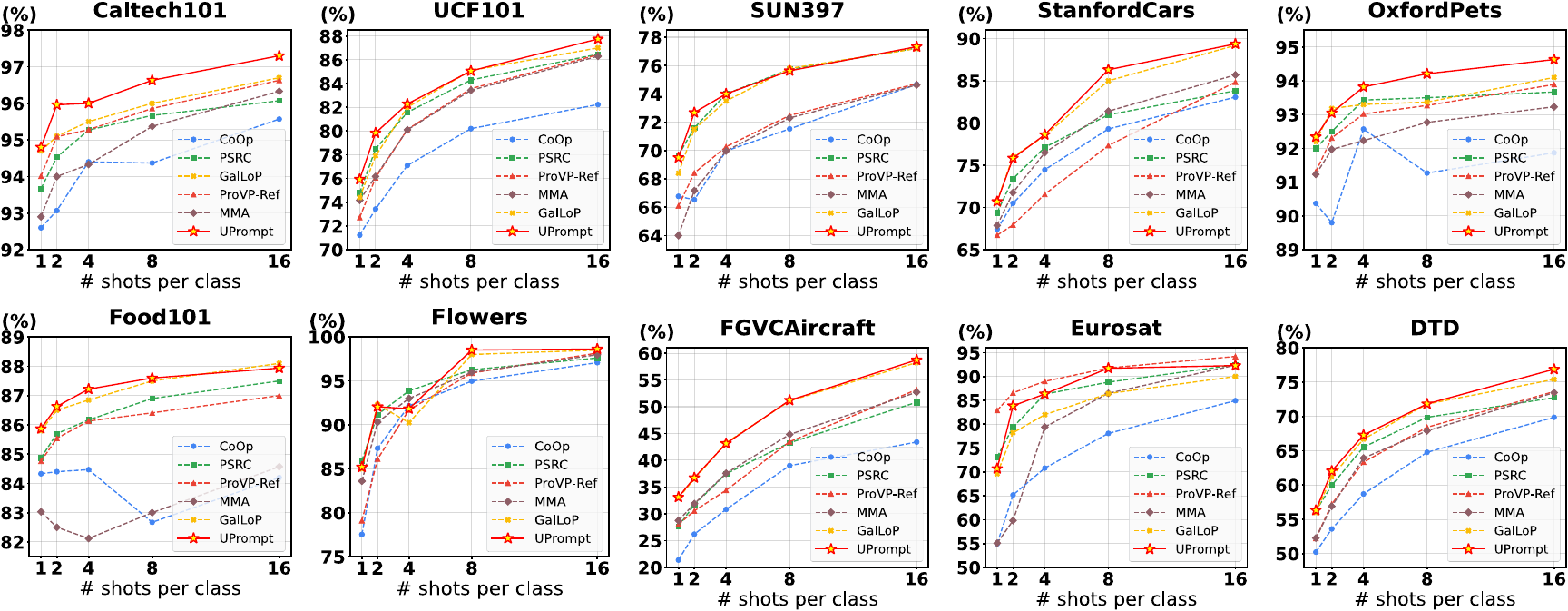}
    \vspace{-8pt}
    \caption{\textbf{Few-shot classification results on individual datasets.} Detailed performance breakdown across 10 evaluation datasets with 1, 2, 4, 8, and 16 shots per class.}
    \label{few-shot_2}
\end{figure}

\begin{table}[h]
  \centering
  \caption{\textbf{Cross-dataset evaluation.} Domain transfer against recent prompt learning methods. Trained on ImageNet, evaluated on 10 datasets. Best results highlighted in \colorbox[HTML]{d0e0f9}{\textbf{first}}, \colorbox[HTML]{e1ecfc}{second}.}
  \renewcommand{\arraystretch}{1}
  \scalebox{0.95}{
    \begin{tabular}{lc|cccccccccc>{\columncolor[HTML]{f8f8f8}}c}
    \toprule
    \multirow{2}[4]{*}{} & \textbf{Source} & \multicolumn{11}{c}{\textbf{Target}} \\
\cmidrule{2-13} & \rotatebox{60}{\textbf{ImgNet}} & \rotatebox{60}{\textbf{Cal101}} & \rotatebox{60}{\textbf{Pets}}   & \rotatebox{60}{\textbf{Cars}}  & \rotatebox{60}{\textbf{Flowers}} & \rotatebox{60}{\textbf{Food}}  & \rotatebox{60}{\textbf{FGVC}}  & \rotatebox{60}{\textbf{SUN}}  & \rotatebox{60}{\textbf{DTD}}  & \rotatebox{60}{\textbf{SAT}}  & \rotatebox{60}{\textbf{UCF}}  & \rotatebox{60}{\textbf{\textit{Avg.}}} \\
    \midrule
    \midrule
    CoOp$_{\text{(IJCV'22)}}$ & 71.51  & 93.70  & 89.14  & 64.51  & 68.71  & 85.30  & 18.47  & 64.15  & 41.92  & 46.39  & 66.55  & 63.88  \\
    PSRC$_{\text{(ICCV'23)}}$ & 71.27  & 93.60  & 90.25  & \cellcolor[HTML]{e1ecfc}65.70  & 70.25  & 86.15  & 23.90  & 67.10  & 46.87  & 45.50  & 68.75  & 65.81  \\
    DeKgTCP$_{\text{(ICLR'25)}}$ & \cellcolor[HTML]{e1ecfc}72.33  & \cellcolor[HTML]{e1ecfc}94.73  & 90.02  & 65.49  & \cellcolor[HTML]{e1ecfc}72.39  & \cellcolor[HTML]{e1ecfc}86.59  & 25.05  & 67.19  & 44.47  & \cellcolor[HTML]{e1ecfc}51.37  & 68.78  & 66.61  \\
    TAP$_{\text{(ICLR'25)}}$ & 72.30  & 94.30  & \cellcolor[HTML]{e1ecfc}90.70  & 65.60  & 70.93  & 86.10  & 24.57  & 68.30  & \cellcolor[HTML]{e1ecfc}50.20  & 46.00  & 68.90  & 66.56  \\
    TAC$_{\text{(CVPR'25)}}$ & \cellcolor[HTML]{d0e0f9}\textbf{72.77}  & 94.53  & 90.67  & 65.30  & 72.20  & 85.83  & 23.53  & 67.63  & 47.57  & 48.07  & 70.00  & 66.53  \\
    HiCroPL$_{\text{(ICCV'25)}}$ & 70.84  & 94.48  & 90.13  & 65.68  & 72.03  & 86.46  & \cellcolor[HTML]{d0e0f9}\textbf{26.58}  & \cellcolor[HTML]{d0e0f9}\textbf{68.78}  & \cellcolor[HTML]{d0e0f9}\textbf{53.19}  & 49.19  & \cellcolor[HTML]{d0e0f9}\textbf{70.31}  & \cellcolor[HTML]{e1ecfc}67.38  \\
    CoCoA-Mix$_{\text{(ICML'25)}}$ & 70.85  & 93.46  & 89.07  & 65.59  & 68.72  & 85.78  & 24.10  & 63.61  & 46.41  & 48.18  & 67.78  & 65.27  \\
    \midrule
    UPrompt$_{\text{(Ours)}}$ & 72.25  & \cellcolor[HTML]{d0e0f9}\textbf{94.75}  & \cellcolor[HTML]{d0e0f9}\textbf{90.97}  & \cellcolor[HTML]{d0e0f9}\textbf{66.09}  & \cellcolor[HTML]{d0e0f9}\textbf{72.41}  & \cellcolor[HTML]{d0e0f9}\textbf{86.60}  & \cellcolor[HTML]{e1ecfc}26.44  & \cellcolor[HTML]{e1ecfc}68.64  & 47.51  & \cellcolor[HTML]{d0e0f9}\textbf{52.08}  & \cellcolor[HTML]{e1ecfc}70.05  & \cellcolor[HTML]{d0e0f9}\textbf{67.55}  \\
    \bottomrule
    \end{tabular}%
}
  \label{tab:cross_dataset}%
\end{table}%

\subsection{Cross-dataset evaluation}
\label{appendix_cross_dataset}
As presented in Table~\ref{tab:cross_dataset}, our UPrompt framework demonstrates robust domain generalization capabilities when transferred from ImageNet to 10 downstream datasets. It achieves the highest average accuracy of 67.55\%, underscoring the effectiveness of its architecture in adapting to new data distributions. We compare UPrompt with other methods that also leverage multi-level or hierarchical representations. For instance, TAP~\cite{dingtree} constructs an explicit "concept-attribute-description" hierarchy, while HiCroPL~\cite{zheng2025hierarchical} establishes knowledge flow across network layers. Although these approaches are competitive, particularly HiCroPL on datasets like FGVC and SUN, our UPrompt's U-Net-inspired bidirectional multi-granularity learning leads to more consistent and superior performance across a wider range of tasks, securing the top results on 6 of the 10 target datasets. Furthermore, UPrompt outperforms other recent domain generalization methods like TAC~\cite{hao2025task} and DeKgTCP~\cite{li2025divergence}, validating that our explicit modeling of coarse-to-fine semantic levels is highly effective for robust cross-dataset transfer.


\begin{table}[!h]
  \centering
  \caption{\textbf{Error bar analysis on base-to-novel generalization.} Results report mean accuracy and standard deviation across three independent runs.}
  \setlength{\tabcolsep}{5pt}
  \renewcommand{\arraystretch}{1}
  \scalebox{0.95}{
    \begin{tabular}{l|ccc|ccc|ccc}
    \toprule
    \multirow{2}[2]{*}{\textbf{Method}} & \multicolumn{3}{c|}{\textbf{Average}} & \multicolumn{3}{c|}{\textbf{ImageNet}} & \multicolumn{3}{c}{\textbf{Caltech101}} \\
\cmidrule{2-10}
    & Base & Novel & HM & Base & Novel & HM & Base & Novel & HM \\
    \midrule
    CoOp & 82.69 & 63.22 & 71.66 & 76.47 & 67.88 & 71.92 & 96.00 & 89.81 & 93.73 \\
    \rowcolor[HTML]{DEE9F9}
    UPrompt & 86.35 & 78.29 & 82.12 & 78.65±0.09 & 71.24±0.11 & 74.76 & 98.78±0.04 & 95.84±0.13 & 97.29 \\
    \midrule
    \midrule
    \multirow{2}[2]{*}{\textbf{Method}} & \multicolumn{3}{c|}{\textbf{OxfordPets}} & \multicolumn{3}{c|}{\textbf{StanfordCars}} & \multicolumn{3}{c}{\textbf{Flowers102}} \\
\cmidrule{2-10}
    & Base & Novel & HM & Base & Novel & HM & Base & Novel & HM \\
    \midrule
    CoOp & 93.67 & 95.29 & 94.47 & 78.12 & 60.40 & 68.13 & 97.60 & 59.67 & 74.06 \\
    \rowcolor[HTML]{d2e2fa}
    UPrompt & 96.41±0.32 & 97.92±0.23 & 97.16 & 83.58±0.22 & 74.57±0.18 & 78.82 & 98.54±0.72 & 78.43±0.40 & 87.34 \\
    \midrule
    \midrule
    \multirow{2}[2]{*}{\textbf{Method}} & \multicolumn{3}{c|}{\textbf{Food101}} & \multicolumn{3}{c|}{\textbf{FGVCAircraft}} & \multicolumn{3}{c}{\textbf{SUN397}} \\
\cmidrule{2-10}
    & Base & Novel & HM & Base & Novel & HM & Base & Novel & HM \\
    \midrule
    CoOp & 88.33 & 82.26 & 85.19 & 40.44 & 22.30 & 28.75 & 80.60 & 65.89 & 72.51 \\
    \rowcolor[HTML]{d2e2fa}
    UPrompt & 91.20±0.17 & 92.16±0.22 & 91.68 & 49.33±0.30 & 39.25±0.16 & 43.72 & 83.77±0.19 & 80.05±0.32 & 81.87 \\
    \midrule
    \midrule
    \multirow{2}[2]{*}{\textbf{Method}} & \multicolumn{3}{c|}{\textbf{DTD}} & \multicolumn{3}{c|}{\textbf{EuroSAT}} & \multicolumn{3}{c}{\textbf{UCF101}} \\
\cmidrule{2-10}
    & Base & Novel & HM & Base & Novel & HM & Base & Novel & HM \\
    \midrule
    CoOp & 79.44 & 41.18 & 54.24 & 93.19 & 54.74 & 68.69 & 84.69 & 56.05 & 67.46 \\
    \rowcolor[HTML]{d2e2fa}
    UPrompt & 85.60±1.06 & 67.23±1.64 & 75.31 & 94.82±2.04 & 82.68±2.37 & 88.33 & 89.21±0.62 & 81.83±0.44 & 85.36 \\
    \bottomrule
    \end{tabular}%
    }
  \label{tab:error_bar_b2n}%
\end{table}
\begin{table}[!h]
  \centering
  \caption{\textbf{Error bar analysis on cross-dataset evaluation.} Trained on ImageNet, evaluated on 10 datasets. Results report mean accuracy and standard deviation across three independent runs.}
  \setlength{\tabcolsep}{3.2pt}
  \scalebox{1}{
    \begin{tabular}{c|ccccc}
    \toprule
    \textbf{Method} & \textbf{Caltech101} & \textbf{OxfordPets} & \textbf{StanfordCars} & \textbf{Flowers102} & \textbf{Food101} \\
    \midrule
    CoOp  & 93.70  & 89.14  & 64.51  & 68.71  & 85.30  \\
    \rowcolor[HTML]{d2e2fa}
    UPrompt & 94.51±0.11 & 90.75±0.17 & 65.98±0.08 & 72.19±0.23 & 86.57±0.03 \\
    \midrule
    \midrule
    \textbf{Method} & \textbf{FGVCAircraft} & \textbf{SUN397} & \textbf{DTD} & \textbf{EuroSAT} & \textbf{UCF101} \\
    \midrule
    CoOp  & 18.47 & 64.15 & 41.92 & 46.39 & 66.55 \\
    \rowcolor[HTML]{d2e2fa}
    UPrompt & 26.14±0.13 & 68.42±0.21 & 47.35±0.08 & 53.24±1.18 & 69.93±0.20 \\
    \bottomrule
    \end{tabular}%
    }
  \label{tab:error_bar}%
\end{table}
\subsection{Error bar analysis}
\label{app_error_bar}
We conducted error bar analysis across both cross-dataset and base-to-novel evaluation settings, performing three independent runs to ensure robust statistical evaluation. 
For base-to-novel generalization (Table~\ref{tab:error_bar_b2n}), UPrompt maintains consistent performance with low standard deviations across most datasets. While slightly higher variance appears on EuroSAT (±2.37) and DTD (±1.64) due to limited training samples, UPrompt still substantially outperforms CoOp, confirming the reliability of our bidirectional multi-granularity framework.
For cross-dataset evaluation (Table~\ref{tab:error_bar}), UPrompt demonstrates remarkable stability with low variance on Food101 (±0.03), StanfordCars (±0.08), and DTD (±0.08), validating robust cross-domain generalization.

\section{Other ablation studies}
\label{app_abla}

\subsection{Reliability of fine-grained supervision.}
\label{Reliability of fine-grained supervision}
To validate that fine-grained representations provide reliable supervision signals for coarser levels, we compared single fine-layer supervision with mixed supervision combining fine and medium-granularity teachers on Flickr30K and MSCOCO. Results in Table~\ref{tab:supervision_reliability} demonstrate that fine-layer supervision alone achieves comparable or superior performance across both datasets (Flickr30K I2T R@1: $93.8\%$ vs $93.7\%$; MSCOCO I2T R@1: $70.1\%$ vs $69.8\%$), confirming its sufficiency and stability as the primary teacher signal without requiring multi-layer aggregation.
\begin{table}[h]
  \centering
  \caption{\textbf{Fine-Layer Supervision Reliability.} Comparison of \textbf{fine-layer} versus \textbf{fine + medium-layer} supervision on Flickr30K and MSCOCO. rSum denotes sum of all R@1, R@5, R@10 scores.}
  \vspace{-3pt}
  \renewcommand{\arraystretch}{1}
  \setlength{\tabcolsep}{2pt}
  \scalebox{0.95}{
    \begin{tabular}{l|ccc|ccc|c||ccc|ccc|c}
    \toprule
    \multirow{3}[6]{*}{Teacher Strategy} & \multicolumn{7}{c||}{Flickr30K} & \multicolumn{7}{c}{MSCOCO} \\
\cmidrule{2-15} 
    & \multicolumn{3}{c|}{Image-to-Text} & \multicolumn{3}{c|}{Text-to-Image} & \multirow{2}[4]{*}{rSum} & \multicolumn{3}{c|}{Image-to-Text} & \multicolumn{3}{c|}{Text-to-Image} & \multirow{2}[4]{*}{rSum} \\
\cmidrule{2-7}\cmidrule{9-14}
    & R@1   & R@5   & R@10  & R@1   & R@5   & R@10  &  & R@1   & R@5   & R@10  & R@1   & R@5   & R@10  &  \\
    \midrule
    Fine + Medium (soft mixing) & 93.7 & 99.2  & 99.3  & 83.9  & 96.3  & 98.2  & 570.6 & 69.8  & 89.4  & 94.5  & 52.5  & 78.8  & 87.5  & 472.5 \\
    Fine only (Ours) & 93.8  & 99.4  & 99.6  & 83.6  & 96.3  & 98.4  & 571.1 & 70.1  & 89.8  & 84.8  & 52.6  &  79.1  & 87.9  & 474.3 \\
    \bottomrule
    \end{tabular}%
    }
  \label{tab:supervision_reliability}%
  \vspace{-5pt}
\end{table}

\begin{wraptable}{r}{0.45\textwidth}
\centering
\vspace{-12pt}
\caption{Granularity interval strategy comparison on Flickr30K I2T retrieval using R@1.}
\vspace{-4pt}
\setlength{\tabcolsep}{5pt}
  \renewcommand{\arraystretch}{1}
  \scalebox{1}{
\begin{tabular}{l|ccc}
\toprule
Strategy & Coarse & Medium & Fine \\
\midrule
$14\times14\rightarrow5\times5\rightarrow2\times2$ & 78.6 & 88.3 & 93.6 \\
\rowcolor[HTML]{DEE9F9}
$14\times14\rightarrow7\times7\rightarrow4\times4$ & 82.8 & 89.4 & 93.8 \\
\bottomrule
\end{tabular}
}
\label{tab:interval_strategy}
\vspace{-5pt}
\end{wraptable}

\subsection{Granularity Interval Strategy}
\label{app:interval_strategy}
Beyond the number of granularity levels, we examine different downsampling interval strategies between adjacent granularities. Table~\ref{tab:interval_strategy} compares two strategies on Flickr30K I2T retrieval: our adopted approach following U-Net's principle~\cite{ronneberger2015u} of approximate halving ($14\times14\rightarrow7\times7\rightarrow4\times4$) versus a sparser interval ($14\times14\rightarrow5\times5\rightarrow2\times2$). Both strategies achieve comparable performance at the fine-grained level (93.6\% vs 93.8\%), but the sparse interval shows degradation at coarser levels, particularly at the coarse granularity (78.6\% vs 82.8\%). This performance gap is attributed to the reduced expressive capacity at coarser levels under sparser downsampling, demonstrating the effectiveness of our graduated interval design.

\begin{table}[h]
  \centering
  \caption{\textbf{Cross-modal retrieval results on ViT-B/32 backbone.} rSum is the sum of all R@1, R@5, and R@10 scores. Best results highlighted in \colorbox[HTML]{d0e0f9}{\textbf{first}}, \colorbox[HTML]{e1ecfc}{second}.}
  \setlength{\tabcolsep}{3pt}
  \scalebox{0.95}{
    \begin{tabular}{l|ccc|ccc|c||ccc|ccc|c}
    \toprule
    \multirow{3}[6]{*}{Methods} & \multicolumn{7}{c||}{Flickr30K} & \multicolumn{7}{c}{MSCOCO} \\
\cmidrule{2-15} 
    & \multicolumn{3}{c|}{Image-to-Text} & \multicolumn{3}{c|}{Text-to-Image} & \multirow{2}[4]{*}{rSum} & \multicolumn{3}{c|}{Image-to-Text} & \multicolumn{3}{c|}{Text-to-Image} & \multirow{2}[4]{*}{rSum} \\
\cmidrule{2-7}\cmidrule{9-14}
    & R@1   & R@5   & R@10  & R@1   & R@5   & R@10  &  & R@1   & R@5   & R@10  & R@1   & R@5   & R@10  &  \\
    \midrule
    MAMET$_{\text{(TCSVT'25)}}$ & \cellcolor[HTML]{e1ecfc}87.7  & 97.5  & \cellcolor[HTML]{e1ecfc}99.6  & \cellcolor[HTML]{e1ecfc}73.5  & \cellcolor[HTML]{e1ecfc}93.0  & \cellcolor[HTML]{e1ecfc}96.5  & \cellcolor[HTML]{e1ecfc}547.8 & \cellcolor[HTML]{e1ecfc}61.5  & \cellcolor[HTML]{e1ecfc}86.2  & 92.5  & \cellcolor[HTML]{e1ecfc}48.6  & \cellcolor[HTML]{e1ecfc}76.3  & \cellcolor[HTML]{e1ecfc}85.3  & \cellcolor[HTML]{e1ecfc}450.4 \\
    APSE-IPIK$_{\text{(AAAI'25)}}$ & 86.3  & \cellcolor[HTML]{e1ecfc}97.6  & 99.4  & 72.0  & 92.5  & 95.1  & 542.9 & 59.1  & 85.7  & \textbf{94.6}  & 45.1  & 72.8  & 82.5  & 439.8 \\
    \midrule
    UPrompt$_{\text{(Ours)}}$ & \cellcolor[HTML]{d0e0f9}\textbf{88.9}  & \cellcolor[HTML]{d0e0f9}\textbf{97.6}  & \cellcolor[HTML]{d0e0f9}\textbf{99.7}  & \cellcolor[HTML]{d0e0f9}\textbf{74.0}  & \cellcolor[HTML]{d0e0f9}\textbf{93.2}  & \cellcolor[HTML]{d0e0f9}\textbf{96.8}  & \cellcolor[HTML]{d0e0f9}\textbf{550.2} & \cellcolor[HTML]{d0e0f9}\textbf{62.4}  & \cellcolor[HTML]{d0e0f9}\textbf{86.8}  & \cellcolor[HTML]{e1ecfc}93.8  & \cellcolor[HTML]{d0e0f9}\textbf{49.7}  & \cellcolor[HTML]{d0e0f9}\textbf{76.9}  & \cellcolor[HTML]{d0e0f9}\textbf{85.7}  & \cellcolor[HTML]{d0e0f9}\textbf{455.3} \\
    \bottomrule
    \end{tabular}%
    }
  \label{tab:vit_b32_retrieval_results}%
\end{table}

\begin{table}
  \centering
  \caption{\textbf{Image-to-text retrieval on ViT-L/14 backbone.} Results highlighted in \colorbox[HTML]{d0e0f9}{\textbf{first}}, \colorbox[HTML]{e1ecfc}{second}.}
  \vspace{-2pt}
  \renewcommand{\arraystretch}{1}
  \scalebox{1}{
    \begin{tabular}{l|ccc|ccc|c}
    \toprule
    \multirow{2}[4]{*}{Methods} & \multicolumn{3}{c|}{MSCOCO} & \multicolumn{4}{c}{Flickr30K} \multirow{2}[4]{*}{rSum} \\
\cmidrule{2-4}\cmidrule{5-7}
    & R@1   & R@5   & R@10  & R@1   & R@5   & R@10  &  \\
    \midrule
    Unicoder-VL & 62.3  & 87.1  & 92.8  & 86.2  & 96.3  & 99.0  & 523.7 \\
    Oscar & 73.5  & 92.2  & 96.0  & -  & -  & -  & - \\
    ERNIE-ViL & - & - & - & 88.7  & 98.0  & 99.2  & - \\
    AAPE & \cellcolor[HTML]{e1ecfc}76.7  & \cellcolor[HTML]{e1ecfc}94.5  & \cellcolor[HTML]{e1ecfc}97.4  & \cellcolor[HTML]{e1ecfc}94.9  & \cellcolor[HTML]{e1ecfc}99.3  & \cellcolor[HTML]{e1ecfc}99.7  & \cellcolor[HTML]{e1ecfc}561.8 \\
    \midrule
    UPrompt& \cellcolor[HTML]{d0e0f9}\textbf{77.8}  & \cellcolor[HTML]{d0e0f9}\textbf{94.9}  & \cellcolor[HTML]{d0e0f9}\textbf{97.4}  & \cellcolor[HTML]{d0e0f9}\textbf{95.1}  & \cellcolor[HTML]{d0e0f9}\textbf{99.7}  & \cellcolor[HTML]{d0e0f9}\textbf{99.8}  & \cellcolor[HTML]{d0e0f9}\textbf{564.7} \\
    \bottomrule
    \end{tabular}%
    }
  \label{tab:vit_l14_image_to_text}%
  \vspace{-5pt}
\end{table}
\subsection{Cross-modal retrieval with different backbones}
\label{appendix_different_backbones}
To validate UPrompt's generalizability across different architectures, we evaluate on ViT-B/32 and ViT-L/14 backbones (Tables~\ref{tab:vit_b32_retrieval_results} and \ref{tab:vit_l14_image_to_text}). On ViT-B/32, UPrompt achieves 550.2 and 455.3 rSum on Flickr30K and MSCOCO respectively, outperforming MAMET~\cite{wang2025matryoshka} (547.8, 450.4) and APSE-IP1K~\cite{huang2025adaptive} (542.9, 439.8). The consistent improvements across different model scales demonstrate the robustness of our multi-granularity framework. On the larger ViT-L/14 backbone for image-to-text retrieval, UPrompt achieves 564.7 rSum, outperforming AAPE~\cite{huang2024aggregate} (561.8), Unicoder-VL~\cite{li2020unicoder} (523.7), Oscar~\cite{li2020oscar}, and ERNIE-ViL~\cite{yu2021ernie}, with particularly strong performance on Flickr30K (95.1\% R@1) and MSCOCO (77.8\% R@1). These results confirm that our bidirectional connection mechanisms effectively leverage increased model capacity, with hierarchical supervision preventing semantic drift across granularities regardless of backbone architecture.

\begin{figure}[htbp]
\centering
\begin{minipage}{0.48\textwidth}
  \centering
  \captionof{table}{\textbf{Base-to-novel generalization on alternative VLM architectures.} UPrompt consistently outperforms baselines across SigLIP and EVA-CLIP.}
  \vspace{1pt}
  \renewcommand{\arraystretch}{1.2}
  \scalebox{1}{
    \begin{tabular}{l|l|ccc}
    \toprule
    Methods & Backbone & Cars & Flowers & FGVC \\
    \midrule
    CoOp  & EVA-CLIP & 71.33 & 77.39 & 34.72 \\
    \rowcolor[HTML]{d2e2fa}
    UPrompt & EVA-CLIP & \textbf{79.42} & \textbf{87.36} & \textbf{43.86} \\
    CoOp  & SigLIP & 92.33 & 89.42 & 38.27 \\
    \rowcolor[HTML]{d2e2fa}
    UPrompt & SigLIP & \textbf{94.67} & \textbf{93.26} & \textbf{46.34} \\
    \bottomrule
    \end{tabular}%
   }
  \label{tab:alternative_VLM}%
\end{minipage}
\hfill
\begin{minipage}{0.48\textwidth}
  \centering
  \captionof{table}{\textbf{Rule-based hierarchies} on CUB-200 and AWA2 in generalized zero-shot learning.}
  \vspace{-4pt}
  \setlength{\tabcolsep}{8pt}
  \renewcommand{\arraystretch}{0.79}
  \scalebox{0.86}{
    \begin{tabular}{llcccc}
    \toprule
    Dataset & Method & Level & Base  & New   & HM \\
    \midrule
    \multirow{4}[2]{*}{AWA2} & CoOp  & Single & 95.32 & 72.68 & 82.47 \\
    \cmidrule{2-6}
          & \multirow{3}[1]{*}{Uprompt} & Level 1 & 93.24 & 70.27 & 80.14 \\
          &       & Level 2 & 95.81 & 73.13 & 82.95 \\
          &       & Level 3 & \textbf{96.70} & \textbf{74.62} & \textbf{84.24} \\
    \midrule
    \midrule
    \multirow{4}[2]{*}{CUB-200} & CoOp  & Single & 63.78 & 49.23 & 55.57 \\
    \cmidrule{2-6}
          & \multirow{3}[1]{*}{UPrompt} & Level 1 & 61.36 & 46.84 & 53.13 \\
          &       & Level 2 & 64.65 & 50.42 & 56.65 \\
          &       & Level 3 & \textbf{66.12} & \textbf{51.26} & \textbf{57.75} \\
    \bottomrule
    \end{tabular}%
    }
  \label{tab:rule_based text construction}
\end{minipage}
\vspace{-5pt}
\end{figure}

\subsection{Generalization across VLM architecture}
\label{app_vlm_arch}
To validate UPrompt's generalizability across diverse vision-language models, we evaluate on alternative architectures including SigLIP and EVA-CLIP with ViT-B/16 backbones. Table~\ref{tab:alternative_VLM} presents base-to-novel generalization results on three representative datasets. UPrompt achieves consistent improvements over CoOp across both architectures: on EVA-CLIP, gains of +8.09\%, +9.97\%, and +9.14\% HM on StanfordCars, Flowers102, and FGVCAircraft respectively; on SigLIP, gains of +2.34\%, +3.84\%, and +8.07\% HM respectively. Notably, the improvements are particularly pronounced on the fine-grained FGVCAircraft dataset (+9.14\% on EVA-CLIP, +8.07\% on SigLIP), demonstrating that our multi-granularity framework effectively enhances fine-grained recognition across different VLM architectures. These results confirm that our bidirectional connection mechanisms operate effectively at the prompt level, independent of the underlying vision-language model architecture.

\section{Rule-Based Text Construction.}
\label{rule_based_cons}
To demonstrate that our framework's effectiveness does not rely on LLM-generated priors, we conduct experiments using rule-based text construction on CUB-200~\cite{wah2011caltech} and AWA2~\cite{xian2017zero} datasets, which provide dense attribute annotations. We construct three granularity levels: Level 1 (coarse) uses "a photo of a {class}", Level 2 (medium) adds the highest-certainty attribute (e.g., "a photo of a {class} with black bill"), and Level 3 (fine) incorporates two highest-certainty attributes (e.g., "a photo of a {class} with black bill and white breast"), where attributes are ranked by certainty scores from dataset metadata. Visually, the three levels correspond to 4×4 pooled tokens, 7×7 pooled tokens, and the original 14×14 tokens respectively. We compare against CoOp, which uses "a photo of a {class}" with 14×14 visual tokens as the single-granularity baseline. Table~\ref{tab:rule_based text construction} presents results on generalized zero-shot learning (GZSL). The progressive improvements from coarse to fine levels validate that our bidirectional connection mechanism effectively integrates multi-scale representations, even without external knowledge sources like LLM generated content.

\begin{figure}[!h]
    \centering    \includegraphics[width=1.0\linewidth]{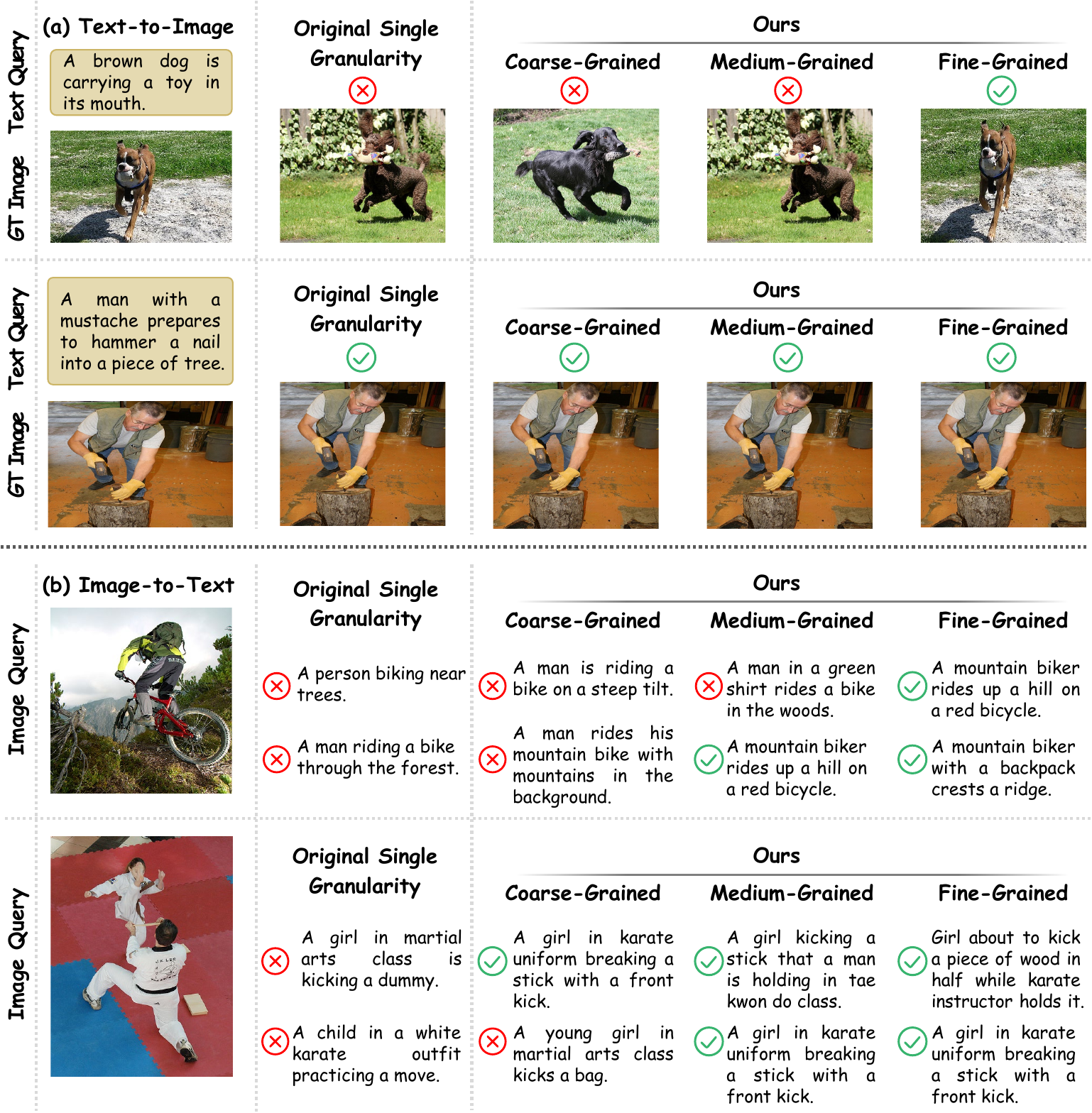}
    \vspace{-8pt}
    \caption{\textbf{Cross-modal retrieval results on Flickr30K dataset.} ``Original Single Granularity'' refers to baseline model using fixed single-scale visual and textual representations. \textcolor[HTML]{FF0000}{\textcircled{$\times$}} indicate retrieval failures, \textcolor[HTML]{37B46E}{\textcircled{$\checkmark$}} indicate successful retrievals.}
    \label{coarse_also_yes_fig}
\end{figure}

\begin{figure}[!h]
    \centering    \includegraphics[width=1.0\linewidth]{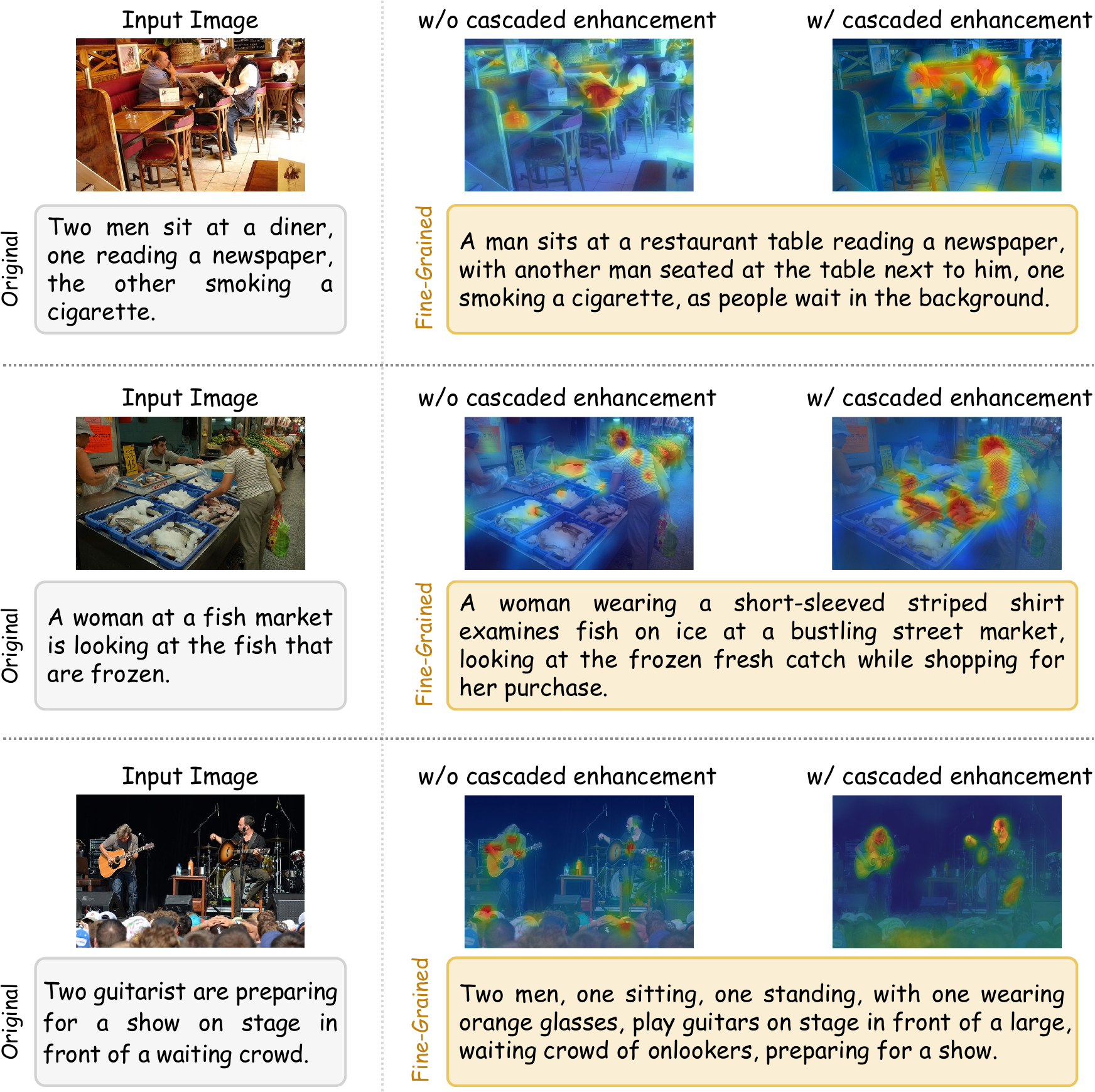}
    \vspace{-8pt}
    \caption{\textbf{Visual validation of Coarse-to-Fine Cascaded Enhancement (CE).} Our CE module addresses the context deficiency of fine-grained embeddings by injecting global contextual guidance. The comparison demonstrates that without CE (middle column), fine-grained attention struggles to model local information relationships. With CE (right column), our model achieves precise, contextually-aware alignment for complex fine-grained descriptions.}
    \label{CAM_cascaded_enhancement_fine}
\end{figure}

\begin{figure}[!h]
    \centering    \includegraphics[width=1.0\linewidth]{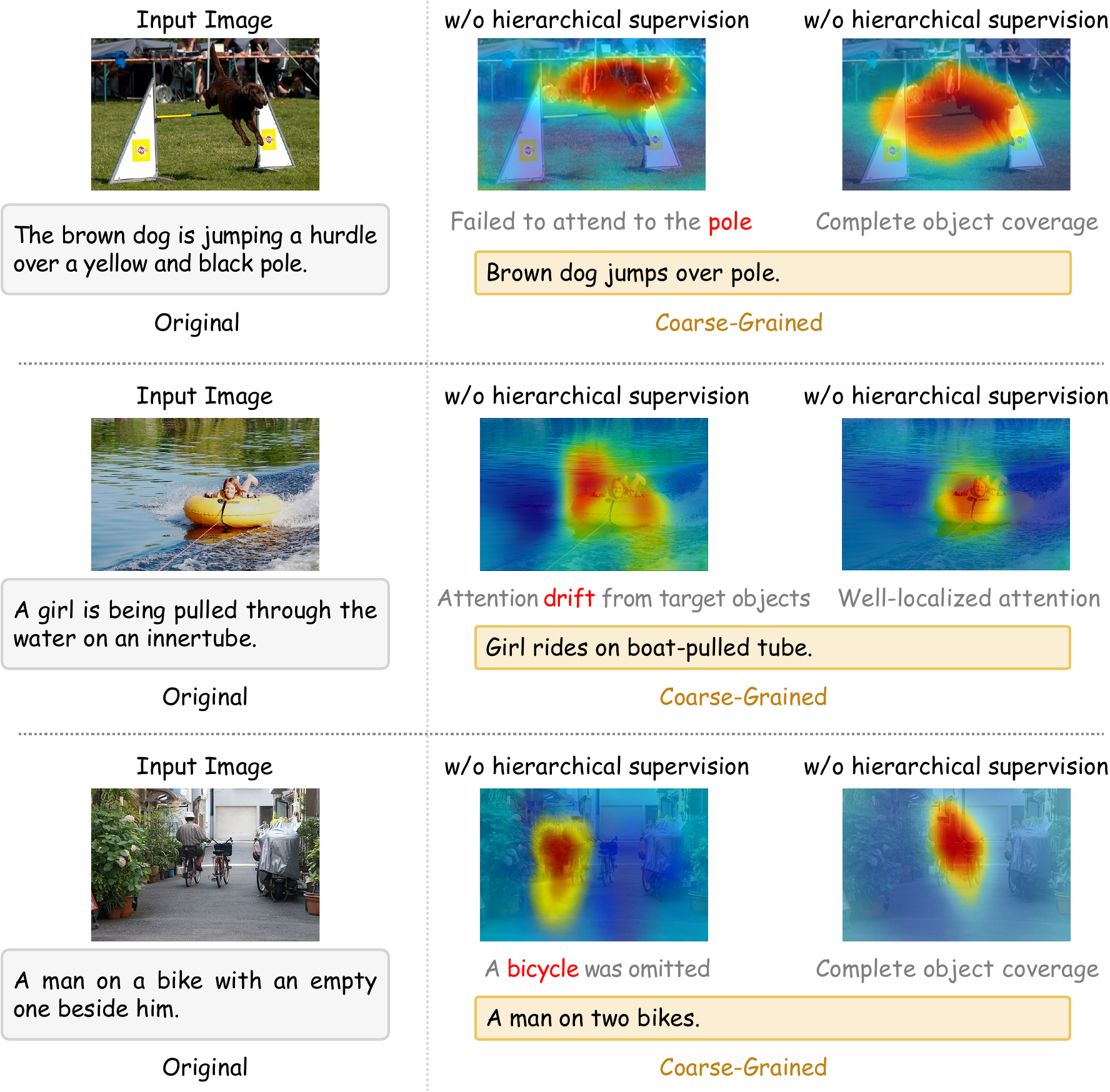}
    \vspace{-8pt}
    \caption{\textbf{Visual validation of Fine-to-Coarse Hierarchical Supervision (HS).} HS prevents semantic drift in coarse-grained representations. Without HS (middle), attention maps show common failures: missing key objects (second bicycle), poor component grounding (pole), or drift to irrelevant backgrounds. With HS (right), fine-level supervision guides coarse models to maintain semantic consistency, producing well-localized attention that accurately reflects textual descriptions.}
    \label{CAM_hierarchical_supervision}
\end{figure}

\section{Visualizations}
\label{app_vis}

\subsection{Granularity-Specific Retrieval Effectiveness.}
\label{appendix_granularity_specific}
The retrieval results in Fig.~\ref{coarse_also_yes_fig} validate our bidirectional connection mechanisms across granularity levels. Fine-grained representations consistently excel in both Text-to-Image and Image-to-Text tasks, resolving challenging cases requiring precise semantic understanding, such as distinguishing ``carrying a toy in its mouth'' from general dog activities. This capability stems from cascaded enhancement providing global contextual guidance, preventing attention from focusing solely on isolated details while capturing comprehensive information. Medium-grained representations outperform the single-granularity baseline while using fewer visual tokens. Coarse-grained representations achieve comparable performance despite using substantially fewer visual and textual tokens, enabled by hierarchical supervision that prevents semantic drift and preserves alignment quality with reduced representational capacity. These results confirm flexible performance-efficiency trade-offs while keeping semantic consistency across hierarchical structure.

\subsection{Visual Analysis of Cascaded Enhancement}
\label{appendix_vis_ce}
Fig.~\ref{CAM_cascaded_enhancement_fine} provides an analysis to visually validate the efficacy of our Coarse-to-Fine Cascaded Enhancement (CE) module. The comparison demonstrates that without CE (middle column), fine-grained attention struggles with context deficiency, failing to accurately ground complex descriptions involving multiple entities or specific attributes. For instance, it cannot disambiguate the "man reading a newspaper" from the one "smoking a cigarette," nor can it precisely locate the "striped shirt" or the "orange glasses." Conversely, by injecting global contextual guidance, our CE module (right column) resolves these ambiguities, enabling precise, contextually-aware alignment. The resulting attention maps successfully disentangle parallel actions and ground fine-grained attributes to their corresponding image regions. This visual evidence substantiates our claim that CE is crucial for addressing the limitations of isolated fine-grained processing, enabling robust alignment for complex, multi-faceted image-text pairs.

\subsection{Visual Analysis of Hierarchical Supervision}
\label{appendix_vis_hs}
Our Fine-to-Coarse Hierarchical Supervision (HS) plays a crucial role in mitigating semantic drift at coarser granularity levels, as visually validated in Fig.~\ref{CAM_hierarchical_supervision}. Without HS, coarse-grained models trained on simplified text-image pairs often produce flawed alignments; the attention may drift to background noise (e.g., the water wake instead of the girl), omit less salient objects mentioned in the text (e.g., the second bicycle), or fail to ground all relevant components (e.g., ignoring the pole). These inconsistencies arise because coarser levels are optimized in isolation with ambiguous supervision. Our HS mechanism addresses this by using the finest-level alignment as a teacher distribution to regularize the learning process across the hierarchy. As demonstrated in the right column, this forces the coarse-grained representations to maintain semantic consistency, resulting in well-localized attention and complete object coverage that correctly reflects the underlying semantics.

\end{document}